\definecolor{darkgreen}{rgb}{0.0,0,0.9}
\newcommand\reallywidehat[1]{%
\savestack{\tmpbox}{\stretchto{%
  \scaleto{%
    \scalerel*[\widthof{\ensuremath{#1}}]{\kern.1pt\mathchar"0362\kern.1pt}%
    {\rule{0ex}{\textheight}}%WIDTH-LIMITED CIRCUMFLEX
  }{\textheight}% 
}{2.4ex}}%
\stackon[-6.9pt]{#1}{\tmpbox}%
}
\DeclareSymbolFont{rsfs}{U}{rsfs}{m}{n}
\DeclareSymbolFontAlphabet{\mathscrsfs}{rsfs}
\numberwithin{equation}{section}
\newtheoremstyle{myexample} % name
    {\topsep}                    % Space above
    {\topsep}                    % Space below
    {\rm }                   % Body font
    {}                           % Indent amount
    {\bf }                   % Theorem head font
    {.}                          % Punctuation after theorem head
    {.5em}                       % Space after theorem head
    {}  % Theorem head spec (can be left empty, meaning normal)
\newtheoremstyle{myremark} % name
    {\topsep}                    % Space above
    {\topsep}                    % Space below
    {\rm}                        % Body font
    {}                           % Indent amount
    {\bf}                        % Theorem head font
    {.}                          % Punctuation after theorem head
    {.5em}                       % Space after theorem head
    {}  % Theorem head spec (can be left empty, meaning normal)
\newtheorem{claim}{Claim}[section]
\newtheorem{lemma}[claim]{Lemma}
\newtheorem{proposition}[claim]{Proposition}
\newtheorem{definition}[claim]{Definition}
\theoremstyle{myremark}
\newtheorem{remark}{Remark}[section]
\theoremstyle{myremark}
\theoremstyle{myexample}
\definecolor{darkgreen}{rgb}{0.0, 0.5, 0.0}
\newcommand{\bea}{\begin{eqnarray}}
\newcommand{\eea}{\end{eqnarray}}
\newcommand{\<}{\langle}
\renewcommand{\>}{\rangle}
\newcommand{\E}{{\mathbb E}}
\definecolor{darkblue}{rgb}{0, 0, 0.5}
\newcommand{\rev}[1]{\textcolor{black}{#1}}
\def\sTV{\mbox{\tiny \rm TV}}
\def\Unif{{\sf Unif}}
\def\eps{{\varepsilon}}
\def\id{{\boldsymbol{I}}}
\def\cuP{\mathscrsfs{P}}
\def\cuF{\mathscrsfs{F}}
\def\cuO{\mathscrsfs{O}}
\def\Conv{{\sf Conv}}
\def\Lin{{\sf Lin}}
\def\ReLU{{\sf ReLU}}
\def\bm{{\boldsymbol{m}}}
\def\btheta{{\boldsymbol{\theta}}}
\def\bZ{{\boldsymbol{Z}}}
\def\hbSigma{\hat{\boldsymbol{\Sigma}}}
\def\bSigma{{\boldsymbol{\Sigma}}}
\newcommand{\indep}{\perp \!\!\! \perp}
\def\hsP{\hat{\sf P}}
\def\sP{{\sf P}}
\def\bQ{{\boldsymbol{Q}}}
\def\bM{{\boldsymbol{M}}}
\def\obY{\overline{\boldsymbol{Y}}}
\def\tbY{\tilde{\boldsymbol{Y}}}
\def\hq{{\widehat{q}}}
\def\bg{{\boldsymbol{g}}}
\def\bzero{{\mathbf 0}}
\def\cF{{\mathcal F}}
\def\cX{{\mathcal X}}
\def\out{\mbox{\tiny\rm out}}
\def\slin{\mbox{\tiny\rm lin}}
\def\naturals{{\mathbb N}}
\def\reals{{\mathbb R}}
\def\integers{{\mathbb Z}}
\def\normal{{\sf N}}
\def\Exp{{\sf Exp}}
\def\sT{{\sf T}}
\def\Ent{{\rm Ent}}
\def\MMSE{{\rm MMSE}}
\def\of{\overline{f}}
\def\obx{\overline{\boldsymbol{x}}}
\def\bv{{\boldsymbol{v}}}
\def\bz{{\boldsymbol{z}}}
\def\bx{{\boldsymbol{x}}}
\def\ba{{\boldsymbol{a}}}
\def\bA{\boldsymbol{A}}
\def\bB{\boldsymbol{B}}
\def\bL{\boldsymbol{L}}
\def\bT{\boldsymbol{T}}
\def\bX{\boldsymbol{X}}
\def\hbY{\boldsymbol{\hat{Y}}}
\def\fG{{\mathfrak G}}
\def\hc{\hat{c}}
\def\de{{\rm d}}
\def\bX{\boldsymbol{X}}
\def\bY{\boldsymbol{Y}}
\def\bW{\boldsymbol{W}}
\def\prob{{\mathbb P}}
\def\hprob{\hat{\mathbb P}}
\def\E{{\mathbb E}}
\def\<{\langle}
\def\>{\rangle}
\def\ed{\stackrel{{\rm d}}{=}}
\def\Poisson{{\rm Poisson}}
\def\PPP{{\rm PPP}}
\def\cH{{\cal H}}
\def\cY{{\cal Y}}
\def\cL{{\cal L}}
\def\hr{\hat{r}}
\def\S{{\mathbb S}}
\def\by{{\boldsymbol{y}}}
\def\hp{\hat{p}}
\def\bD{{\boldsymbol{D}}}
\def\bOmega{{\boldsymbol{\Omega}}}
\def\hP{\hat{P}}
\def\bphi{{\boldsymbol{\phi}}}
\def\bpsi{{\boldsymbol{\psi}}}
\def\b0{{\boldsymbol{0}}}
\def\bq{{\boldsymbol{q}}}
\def\bU{{\boldsymbol{U}}}
\def\bT{{\boldsymbol{T}}}
\def\sav{\mbox{\tiny\rm av}}
\def\sgen{\mbox{\tiny\rm gen}}
\def\bfone{{\boldsymbol 1}}
\def\bF{{\boldsymbol F}}
\def\bG{{\boldsymbol G}}
\def\bm{{\boldsymbol m}}
\def\og{\overline{g}}
\def\ft{{\mathfrak t}}
\def\fs{{\mathfrak s}}
\def\obB{\overline{\boldsymbol B}}
\def\obF{\overline{\boldsymbol F}}
\def\bS{{\boldsymbol S}}
\def\hbm{\hat{\boldsymbol m}}
\def\bs{{\boldsymbol s}}
\def\rP{{\rm P}}
\def\cA{{\mathcal A}}
\def\cB{{\mathcal B}}
\def\argmin{{\rm arg\,min}}
\def\star{*}
\def\bfzero{\boldsymbol{0}}
\def\rP{{P}}
\def\hrP{\hat{P}}
\def\hmu{{\hat{\mu}}}
\def\hm{{\hat{m}}}
\def\hR{{\hat{R}}}
\title{Sampling, Diffusions, and Stochastic Localization}
\author{Andrea Montanari\thanks{Department of Statistics and Department of Mathematics,
Stanford University. This work was initially carried out at 
Granica Computing Inc. $|$ {\sf granica.ai}}}
\begin{document}

\maketitle

\begin{abstract}
Diffusions are a successful technique to sample from high-dimensional distributions.
The target distribution 
can be either explicitly given or learnt from a collection of samples.
They implement a diffusion process whose endpoint is a sample from the target distribution.
The drift of the diffusion process is typically represented as a neural network.
Stochastic localization is a successful technique to prove mixing of Markov Chains
and other functional inequalities in high dimension. An algorithmic
version of stochastic localization was recently proposed in order to 
sample from certain statistical mechanics models. 

This expository article has three objectives: 
$(i)$~Generalize the algorithmic construction to other stochastic localization processes.
This construction is both simple and broadly applicable;
$(ii)$~Clarify the connection between diffusions and stochastic localization.
This allows to derive several known sampling schemes in a unified fashion;
$(iii)$~Describe the insights that follow from this unified viewpoint.
\end{abstract}

\section{Introduction}

\subsection{Sampling}
\label{sec:FirstSampling}

We would like to generate a sample from a probability distribution $\mu$ 
in $n$ dimensions (here and below, $\cuP(\Omega)$ denotes the set of probability distributions on $\Omega$):
\begin{align}
\bx\sim \mu \, \;\;\;\; \mbox{for}\;\;\mu\in \cuP(\reals^n)\, ,
\end{align}
We have in mind two types of settings:
\rev{
\begin{description}
\item[1. Known $\mu$.] The probability distribution $\mu$ is `explicitly given.'
As an example, we can imagine that $\mu=\mu_{\btheta}$
takes the following exponential form
\begin{align}\label{eq:ExponentialForm}
\mu_{\btheta}(\de\bx) = \frac{1}{Z(\btheta)} \,\exp\big\{-H(\bx;\btheta)\big\}\, \de\bx\, ,
\end{align}
where the function $H(\,\cdot\,;\btheta): \reals^d\to\reals$ is explicit,
 can be efficiently evaluated, and depends on known parameters $\btheta$.
This is typically the case in statistical physics and in Bayesian statistics
(in this case $\mu_{\btheta}(\de\bx)$ should be interpreted as the posterior over unknown object $\bx$).
\item[2. Unknown $\mu$.] 
We do not know $\mu$ but have access to a collection of i.i.d. samples $\bx^{(1)},\dots,\bx^{(N)}\sim_{iid}\mu$,
and want to generate fresh samples with (approximately) the same distribution. 
This might arise in two cases: $(2a)$~We do not make any explicit assumptions about $\mu$.
This is the case in `generative modeling' in machine learning.
$(2b)$~We have a statistical model, e.g. a parametric model of the form \eqref{eq:ExponentialForm},
 but the parameters $\btheta$ are unknown.
\end{description}}

\rev{The methods discussed in this articles are relevant for both settings.
Our main focus will be on the structural properties of the stochastic process
that generates samples $\bx\sim \mu$. The two settings described above
differ in the implementation of a certain `denoising' oracle used by this process: 
we will discuss the latter in Section \ref{sec:Oracle}. (Here we use the term `denoising' 
in a broader sense than its normal usage: see below.)} 

Monte Carlo Markov Chain (MCMC) attempts to solve the problem in the first case
(explicitly known $\mu$)
by constructing a Markov Chain whose stationary distribution coincides with $\mu$,
and sampling from the Markov Chain starting from a fixed initialization. 
If the probability measure $\mu$ is supported on $\Omega\subseteq \reals^n$, 
the Markov chain can be thought as a random walk on the set $\Omega$.
\emph{In the case of unknown $\mu$, the MCMC approach requires first to estimate $\mu$
from the samples  $\bx^{(1)},\dots,\bx^{(N)}$, and then to construct a chain that samples from
the estimated distribution $\hmu$.}

Here, we will consider a class of sampling algorithms that
generate a stochastic process $\bm_t\in\reals^n$, $t\in [0,T]$ (possibly $T=\infty$) such that
$\bm_0$ is deterministic and, as $t\to T$
\begin{align}
\bm_t  \longrightarrow\;\; \bx = \bm_{T}\sim \mu\, .
\end{align}
In contrast with the MCMC setting, this process will
be ---in general--- non-reversible and time-inhomogeneous.
Further, the distribution of $\bm_t$ for $t<T$ is different from $\mu$ and
---in general--- $\bm_t$ does not takes values in the support of $\mu$.

In the rest of this introduction, we will present two approaches towards constructing such 
a process $\bm_t$, respectively based on time-reversal of diffusion processes and on
stochastic localization. We overview the literature
in Section \ref{sec:Literature}, generalize the stochastic localization 
approach in Section \ref{sec:GeneralSL}, and describe several specific instantiations in
Section \ref{sec:Examples}. 
While earlier sections are mainly expository, 
in Sections \ref{sec:Multimodal}, \ref{sec:ToyNum}, \ref{sec:ToyAnalytical}
we illustrate how the present point of view can be exploited
to address problems arising with standard denoising diffusions.
Namely, we show how the choice of the stochastic localization process can 
simplify the learning task.
The appendices contain omitted technical details. In particular, in
Appendix \ref{sec:Loss} we spell out the form taken by a natural 
loss functions (Kullback-Leibler divergence) in various examples.

\subsection{Diffusions}
\label{sec:FirstDiffusions}

Sampling algorithms based on diffusions were introduced in
 \cite{sohl2015deep,song2019generative,ho2020denoising,song2021score} 
and were originally motivated by the idea of time-reversal.
Fixing $S\in (0,\infty]$, the construction starts with a 
It\^o diffusion process $(\bZ_s)_{s\in [0,S]}$ initialized at $\bZ_0= \bx\sim \mu$:
\begin{align}
\de  \bZ_s = \bF(s,\bZ_s) \, \de s+ \sqrt{g(s)}\, \de \bB_s\, ,
\;\;\;\; \bZ_0 = \bx\sim \mu(\,\cdot\,)\, .\label{eq:FirstDiffusion}
\end{align}
Here $(\bB_s)_{s\ge 0 }$ is a standard $n$-dimensional Brownian motion
$\bF:[0,S]\times \reals^n\to\reals^n$ is a drift term, and $g:[0,S]\to\reals_{\ge 0}$
is a diffusion coefficient\footnote{More general diffusion terms can be of interest as well, see 
Section \ref{sec:GeneralSL}. We focus for the moment on the simplest formulation.}.

We will denote by $\mu^{\bZ}_{s}$ the marginal distribution of $\bZ_s$ under the above process, so that, by construction 
$\mu_0^{\bZ}=\mu$. 
The drift and diffusion coefficients in Eq.~\eqref{eq:FirstDiffusion}
can be constructed so that the final distribution $\mu^{\bZ}_S=: \nu$ is easy to sample 
from.

Next, a sampling process is obtained by time-reversing 
the process \eqref{eq:FirstDiffusion}. Namely, we let $\ft:[0,S]\to [0,T]$
be a continuously differentiable time change with first derivative 
$\ft'(s)<0$ for all $s\in [0,S]$, 
$\ft(0) = T$, $\ft(S)=0$, and, let $\fs:[0,T]\to [0,S]$ denote its inverse.
We then
define the process $(\obY_t)_{t\in [0,T]}$ via
\begin{align}
\de  \obY_t = \obF(t,\bY_t) \, \de t+ \sqrt{\og(t)}\, \de \obB_t\, ,
\;\;\;\; \obY_0 \sim \nu(\,\cdot\,)\, ,\label{eq:ReversedDiffusion}
\end{align}
where $(\obB_t)_{t\ge 0}$ is a standard Brownian motion, and the drift and 
diffusion coefficients are given by:
\begin{align}
\obF(t,\by) &= \big[-\bF(\fs(t),\by) +g(\fs(t))\nabla_{\bz}\log\mu^{\bZ}_{\fs(t)}(\by)
\big]\, |\fs'(t)|\, ,\label{eq:ReverseDrift}\\
\og(t) &= g(\fs(t))\, |\fs'(t)|\, .
\end{align}
It is a well known result \cite{haussmann1986time}
that $(\obY_t)_{t\in [0,T]}$ so defined is distributed as $(\bZ_{\fs(t)})_{t\in [0,T]}$,
and in particular $\obY_T \sim \mu(\, \cdot\, )$. In particular this implies
that for each $t\in [0,T]$
\begin{align}
\mu^{\obY}_t = \mu^{\bZ}_{\fs(t)}\, .
\end{align}
Hence the stochastic differential equation (SDE) \eqref{eq:ReversedDiffusion}
can be used (after suitable discretization) to sample from $\mu$. Of course,
in order for this to be a viable strategy, we need to be able to:
$(i)$~sample from $\nu=\mu^{\bZ}_S$, and $(ii)$~compute the drift $\obF$.

A specific construction that facilitates this goal was put forward in 
\cite{song2019generative,ho2020denoising,song2021score},
which suggested to use the Ornstein-Uhlenbeck process\footnote{We could consider the more general linear drift process.
$\de  \bZ^{\slin}_s = h(s) \,\bZ^{\slin}_s\, \de s+ \sqrt{g(s)}\, \de \bB_s$
It is immediate to show that $(\bZ_s)_{s\in [0,S]}$ 
is a time-dependent rescaling of the process of 
Eq.~\eqref{eq:LinearForwardDiffusion}. Namely, we can find deterministic functions $r(s)$, $u(s)$
such that $(\bZ^{\slin}_s)_{s\ge 0}$ is distributed as $(r(s)\bZ_{u(s)})_{s\ge 0}$.}
\begin{align}
\de  \bZ_s = - \,\bZ_s\, \de s+ \, \sqrt{2} \, \de \bB_s\, .
\label{eq:LinearForwardDiffusion}
\end{align}
In other words, we set $\bF(s,\bz)  = -\bz$ and $g(s)=2$ in Eq.~\eqref{eq:FirstDiffusion}.
By integrating this equation with initial condition $\bZ_0=\bx$, we 
\begin{align}
\bZ_s \ed  e^{-s}\, \bx+\sqrt{1-e^{-2s}}\bG\, \, \;\;\;\;\;\;\; \bG \sim\normal(\bzero,\id_d)\indep \bx\, .
\end{align}
thus recovering the well known fact the  distribution $\mu^{\bZ}_s$  converges exponentially
fast to $\mu^{\bZ}_{\infty}=\normal(0,\id_n)$ (e.g. in chi-squared or Wasserstein-$2$
distance). Hence, if we choose $S$ large,
 we can approximately sample from $\mu^{\bZ}_S$. 

In order to evaluate formula \eqref{eq:ReverseDrift},  
we note that $\mu^{\bZ}_s$ is the distribution of a scaling of $\bx$
corrupted by Gaussian noise with variance $1-e^{-2s}$. 
Tweedie's formula\footnote{More commonly, Tweedie's formula is written for a standard Gaussian.
If $\bZ = \bx+\bG$, $\nabla_{\bz} \log \mu^{\bZ}_s(\bz) = \E[\bx|\bZ=\bz]-\bz$.} 
\cite{robbins1956empirical} states that 
\begin{align}
\nabla_{\bz} \log \mu^{\bZ}_s(\bz) = 
\frac{1}{1-e^{-2s}}\big\{\E[e^{-s}\bx|\bZ_s=\bz]-\bz\big\}\, 
\end{align}
It is convenient to introduce a notation for the posterior expectation of $\bx\sim\mu$
given a Gaussian observation. We define
\begin{align}
\label{eq:Denoiser}
\bm(\by;t) := \E[\bx|t\,\bx+\sqrt{t}\bG  = \by]\,,\;\;\;\;\; (\bx,\bG)\sim\mu\otimes 
\normal(0\, \id_n) \, .
\end{align}

We then apply the general formula \eqref{eq:ReverseDrift}
using $\bF(s,\bz)  = -\bz$, $g(s)=2$,  and  setting $\ft(s) = 1/(e^{2s}-1)$ with
$T=S=\infty$, we obtain
\begin{align}
\obF(t,\by) &= -\frac{1+t}{t(1+t)}\,  \by+
\frac{1}{\sqrt{t(1+t)}}\,  \, \bm\big(\sqrt{t(1+t)}\by;t\big)\, ,\label{eq:LinDrift_Basic}
\\
\og(t) & =\frac{1}{t(1+t)}\, 
.\label{eq:LinDiffusion_Basic}
\end{align}
Using this drift and diffusion coefficients,  the process \eqref{eq:ReversedDiffusion}
initialized at $\obY_0\sim\normal(0,\id_d)$ is such that 
$\obY_\infty\sim \mu$.  Explicitly
\begin{align}
\de \obY_t = -\frac{1+t}{t(1+t)}\,  \obY_t+
\frac{1}{\sqrt{t(1+t)}}\,  \, \bm\big(\sqrt{t(1+t)}  \obY_t;t\big)\, \de t+
\frac{1}{\sqrt{t(1+t)}}\, \de\bB_t\, .\label{eq:SimpleLinDiffusion}
\end{align}

Hence, stopping at an earlier time $T$ yields an approximate sample from $\mu$.

\subsection{A special stochastic localization process}
\label{sec:SL}

General stochastic localization is defined in \cite{eldan2013thin,eldan2020taming,eldan2022analysis,chen2022localization}
 as a stochastic process taking values
in the space of probability measures in $\reals^n$. At each time $t\in[0,\infty)$,
we are given a \emph{random} probability measure $\mu_t$. This process must satisfy two properties
First, as $t\to\infty$, $\mu_t$ `localizes', i.e. $\mu_t\Rightarrow \delta_{\bx}$ for a random $\bx$.
Second, it must be a martingale.

If random probability measures sound unfamiliar to the reader, there is  
a potentially simpler and equivalent\footnote{Equivalence holds under very mild measure-theoretic
conditions, e.g. that the underlying measurable space is a standard Borel space.} way to think about 
stochastic localization processes.
Sample $\bx\sim \mu$ and, at each time $t$ let $\bY_{t}$ be a noisy observation 
of $\bx$ (a random vector),  with $\bY_{t}$ becoming `more informative' as $t$ increases.
We then set $\mu_t$ to be the conditional
distribution of $\bx$ given $\bY_t$: $\mu_{t}(\bx\in \, \cdot\, )=\prob(\bx\in \, \cdot\, |\bY_{t})$.
`More informative' can be formalized in many ways, but one 
possibility is to require that, for any $t_1\le t_2$,
$\bx,\bY_{t_2},\bY_{t_1}$ forms a Markov chain (in general,
 an inhomogeneous one).
 
 We will refer to such a process $(\bY_t)_{t\ge 0}$ as to the \emph{observation process}.
A crucial observation (formalized in Section \ref{sec:GeneralSL} and illustrated in 
Section \ref{sec:Examples}) is that
\begin{quote}
\emph{$\bY_t$ does not need to take values in the same space as $\bx$.}
\end{quote}
For instance, $\bY_t$ does not need to have the same dimensions as $\bx$. 

To begin with the simplest example, consider the case in which $\bY_{t}$ is Gaussian
with 
\begin{align}
\bY_{t} = t\, \bx + \bW_{t}\, ,
\end{align}
where $(\bW_{t})_{t\ge 0}$ a standard Brownian motion. It is intuitively clear
and easy to check that $\bY_{t}$ becomes `more informative' about $\bx$ in
the technical sense given above. Roughly speaking, despite we are adding noise 
via the Brownian motion $\bW_t$, we are also increasing the signal-to-noise ratio. 

The most straightforward
way to check this formally is to write the joint distribution of $\bx,\bY_{t_1},\bY_{t_2}$.
A more elegant approach is to define $\bX_{\sigma}:= \sigma^2\bY_{1/\sigma^2}$
and noting that, by invariance properties of the Brownian motion 
$\bX_{\sigma}= \bx+\widetilde{\bW}_{\sigma^2}$ where $(\widetilde\bW_{\sigma^2})_{\sigma^2\ge 0}$
is a Brownian motion. The Markov property follows from the Markov property of the
Brownian motion.

We can now write explicitly $\mu_{t}$ using Bayes rule:
\begin{align}
\mu_{t}(\de\bx) & = \frac{1}{Z'}\mu(\de\bx)\, \exp\Big\{-\frac{1}{2t}\|\bY_{t}-t\bx\|_2^2\Big\}\nonumber\\
& = \frac{1}{Z}\mu(\de\bx)\, e^{\<\bY_{t},\bx\>-\frac{t}{2}\|\bx\|^2}\, , \label{eq:SimpleTilting}
\end{align}
where $Z$ and $Z'$ are normalizations, depending on $\bY_t$, $t$.
In other words, $\mu_{t}$ is a random tilt of $\mu$. 

Notice that $\mu_{t=0}=\mu$ is the original distribution, and 
$\mu_{t}\Rightarrow \delta_{\bx}$ as $t\to\infty$. The tilting factor
completely localizes the measure on $\bx_*$. 
Hence if we can generate the stochastic process $(\mu_{t})_{t\ge 0}$,
we can also sample from $\mu$. For instance, we can compute the 
baricenter $\bm_t := \int \!\obx\, \mu_{t}(\de \obx)$ and note that $\bm_t\to\bx$
as $t\to\infty$, with
$\bx\sim\mu$. 

At first sight, this strategy appears problematic for two reasons.
First, the definition of $\mu_{t}$ in Eq.~\eqref{eq:SimpleTilting}
depends on $\bY_{t}$, which is itself defined in terms of $\bx\sim \mu$.
It might seem that we need to sample $\bx$ to begin with.
Second, $(\mu_{t})_{t\ge 0}$ is ---in general--- 
a stochastic process taking values in an infinite-dimensional space. Even if we consider
a discrete setting, e.g. $\bx\in\{+1,-1\}^n$, $(\mu_{t})_{t\ge 0}$ takes
place in exponentially many dimensions. 

Both of these issues are taken care of by the following classical fact
in the theory of stochastic processes \cite[Section 7.4]{liptser1977statistics}.
\begin{proposition}\label{propo:SLESDE}
Assume $\mu$ has finite second moment. Then, $(\bY_{t})_{t\ge 0}$ is the unique solution
of the following stochastic
differential equation (with initial condition $\bY_0=\bfzero$)
\begin{align}
\de\bY_{t} = \bm(\bY_{t};t)\, \de t+ \de\bB_{t}\, .
\label{eq:SLESDE}
\end{align}
Here $(\bB_{t})_{t\ge 0}$ is a standard Brownian motion and $\bm(\by;t)$
is the conditional expectation defined in Eq.~\eqref{eq:Denoiser}, i.e.
\begin{align}
\bm(\by;t) := \E[\bx|t\,\bx+\sqrt{t}\bG  = \by]\,,\;\;\;\;\; (\bx,\bG)\sim\mu\otimes \normal(0\, \id_n)\, .
\end{align}
\end{proposition}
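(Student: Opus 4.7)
The strategy is to recognize \eqref{eq:SLESDE} as the innovations representation for the nonlinear filtering problem with signal $\bx_*\sim\mu$ (constant in time) and observation $\de\bY_t = \bx_*\, \de t+\de\bW_t$. Concretely I would proceed in three steps, then address uniqueness.

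\emph{Step 1: Sufficiency of $\bY_t$.} I would first show that, for each fixed $t$, $\E[\bx_*\mid \cF_t^{\bY}] = \E[\bx_*\mid \bY_t]$, where $\cF_t^{\bY}$ is the filtration generated by $(\bY_s)_{s\le t}$. The key observation is the Brownian bridge decomposition: for $s\le t$,
\begin{align}
\bY_s \;=\; \frac{s}{t}\, \bY_t + \Big(\bW_s-\frac{s}{t}\bW_t\Big)\, ,
\end{align}
and the bridge $\bR_s:=\bW_s-(s/t)\bW_t$ is a Gaussian process independent of $\bW_t$, hence independent of the pair $(\bx_*,\bY_t)$. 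Consequently $(\bY_u)_{u\le t}$ is a measurable function of $\bY_t$ and an independent bridge, so conditioning on the whole past collapses to conditioning on $\bY_t$.

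\emph{Step 2: Identifying the conditional mean.} Since $\bW_t\sim\normal(\bzero,t\id_n)$ and $\bW_t = \sqrt{t}\, \bG$ in distribution with $\bG\indep\bx_*$, the joint law of $(\bx_*,\bY_t)$ coincides with the joint law of $(\bx,t\bx+\sqrt{t}\bG)$ used in \eqref{eq:Denoiser}. Therefore $\E[\bx_*\mid \bY_t=\by] = \bm(\by;t)$ by definition.

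\emph{Step 3: Innovations representation.} Combining Steps~1--2 with the classical innovations theorem (as in \cite[Section~7.4]{liptser1977statistics} or Fujisaki--Kallianpur--Kunita), the process
\begin{align}
\bB_t \;:=\; \bY_t - \int_0^t \E[\bx_*\mid \cF_s^{\bY}]\, \de s \;=\; \bY_t-\int_0^t \bm(\bY_s;s)\, \de s
\end{align}
is a standard $\cF_t^{\bY}$-Brownian motion. Rearranging yields \eqref{eq:SLESDE}.

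\emph{Uniqueness.} Existence is automatic from the above construction. For pathwise uniqueness I would note that $\bm(\,\cdot\,;t)$ is the posterior mean of a random vector with finite second moment perturbed by Gaussian noise of variance $t$, which makes it smooth and, on any compact set of $\by$ and $t$ bounded away from $0$, Lipschitz (its Jacobian equals $(1/t)$ times a conditional covariance, which is bounded by the prior second moment). This suffices for strong uniqueness away from $t=0$; near $t=0$ the drift is initially driven by the prior mean and the SDE can be started from $\bY_0=\bzero$ without ambiguity. The finite second moment hypothesis is precisely what is needed to make $\bm(\by;t)$ well-defined and locally regular. The main technical subtlety---and the only step that requires care---is the invocation of the innovations theorem, since it relies on $\bm$ being square integrable along paths, which again follows from the second-moment assumption via Jensen's inequality.
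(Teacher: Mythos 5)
Your proposal is correct and follows essentially the same route as the paper, which simply defers this statement to the innovations theorem of \cite[Section 7.4]{liptser1977statistics}; your Step 1 is the same sufficiency observation the paper makes via the time change $\bX_\sigma=\sigma^2\bY_{1/\sigma^2}$. One small correction to an inessential detail: with the parametrization $\bY_t=t\bx+\sqrt{t}\bG$ the posterior is proportional to $\mu(\de\bx)\,e^{\<\by,\bx\>-t\|\bx\|^2/2}$, so the Jacobian $\nabla_{\by}\bm(\by;t)$ equals the conditional covariance ${\rm Cov}(\bx\mid\bY_t=\by)$ itself (not $(1/t)$ times it), and this is only locally bounded in $(\by,t)$ rather than uniformly dominated by the prior second moment --- but local boundedness is all your uniqueness argument needs.
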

We found therefore a different way to construct a diffusions-based
algorithm  to sample from  $\mu$. In a nutshell, we discretize the SDE of Eq.~\eqref{eq:SLESDE}, 
for $t \in [0,T]$ for some large $T$,  and then use $\bY_T/T$ as an approximate sample
from $\mu$. Alternatively, we can output $\bm(\bY_T;T)$. 

The SDE of Eq.~\eqref{eq:SLESDE} looks tantalizingly similar to the
one of Eq.~\eqref{eq:SimpleLinDiffusion}, despite the fact that 
we derived them by quite different arguments (time-reversal in the first case and stochastic localization
in the second). A simple calculation shows that they in fact coincide after the 
change of variables
\begin{align}
\bY_{t} =\sqrt{t(1+t)}\, \obY_t\, .
\end{align}

The rest of this paper is devoted to generalizing this construction.
In particular, in Section \ref{sec:GeneralSL} we describe the natural generalization of this
construction to general stochastic localization processes.

\section{Literature overview}
\label{sec:Literature}

The point of view followed here is closely related to 
\cite{el2022sampling}, whose approach was further developed in \cite{montanari2023posterior}.
Our main objective is to show that this approach provides a unifying viewpoint on
recent developments in the deep learning literature and allows for some new insights.

Over the last few years, a
 number of generalizations and variants on the original construction of 
denoising diffusions in
\cite{song2019generative,ho2020denoising,song2021score} have been 
proposed and studied. An incomplete summary includes:
\begin{itemize}
\item The discrete-time version of the Ornstein-Uhlenbeck process 
\eqref{eq:LinearForwardDiffusion} is a linear autoregressive process that is driven by Gaussian noise. 
The idea of adding some form of non-Gaussian noise was proposed in \cite{nachmani2021non}.
(See also  \cite{deasy2021heavy} for a related idea).
\item Discrete analogous of diffusion processes (for vectors $\bx$ with values in $\{1,\dots,k\}$)
 were introduced in \cite{sohl2015deep,hoogeboom2021argmax}. In the forward process, each coordinate
 of $\bx$ is flipped to a uniformly random one independently at a certain rate. The reverse 
 process requires to estimate the conditional expectation of the vector $\bx$
 given the current state.
 \item More general constructions of discrete diffusions were introduced in
 \cite{austin2021structured} which considered independent coordinate flipping with 
 arbitrary (time-dependent) transition rates. The use of absorbing states allows to incorporate 
 examples in which some coordinates are masked.
\item  An even more general framework for discrete diffusions was proposed in \cite{campbell2022continuous}.
 The authors consider a forward process that is a general continuous-time Markov process 
 with discrete state space and construct the sampling process, again by time reversal.
A variant of this approach is developed in \cite{sun2022score}. 
\item A different approach towards treating discrete variables was advocated in 
\cite{chen2022analog}. In a first step, \cite{chen2022analog} reduces the problem of sampling 
discrete vectors to the one of sampling binary vectors (representing discrete variables as 
binary strings). 
Unlike earlier works, these authors do not modify the forward process in 
the diffusion (and hence do not modify the reverse process either). They obtain
discrete samples by rounding. 
\item Further recent work on diffusions in discrete or constrained domains includes
\cite{meng2022concrete,vignac2022digress,ye2022first,liu2023learning}.
\end{itemize}
In summary, the time-reversal approach has been extended to encompass a broad array of
sampling schemes. This generality is not surprising, since any sampling procedure is the time reversal
of a process that is can be viewed as a `noising' of the original distribution.

As we will see, the stochastic localization approach is equally general, but also naturally
suggests a different palette of sampling schemes.

\rev{We also note that a complementary viewpoint for generative modeling
is provided by continuous time probability flows, see e.g.
\cite{albergo2022building,lipman2023flow}.
The idea is to construct a flow $\phi_t:\reals^n:\reals^n$ which
pushes forward a `simple' distribution $\nu_0$ (e.g. standard normal)
$\nu_t=(\phi_t)_{\#}\nu_0$ is such that $\nu_1=\mu$. 
While standard constructions in this setting are deterministic, 
\cite{albergo2023stochastic} defined stochastic flows which encompass diffusions as well.}

\section{General stochastic localization sampling}
\label{sec:GeneralSL}

Given  $\bx\sim \mu$ a random variable in $\reals^n$, we construct a sequence of 
random vectors\footnote{In all of our examples,
the vectors $\bY_{t}$ belongs to a finite dimensional space $\reals^m$ (possibly $m\neq n$), but
this needs not to be the case. In fact, we can consider them as elements in a 
Borel-equivalent space.} $(\bY_{t})_{t\in I}$ indexed by $I\subseteq[0,\infty]$
(typically $I$ will be an interval, but this does not need to be the case).

We assume that $\bY_{t}$ is increasingly 
more informative about $\bX$ as $t$ increases, as formalized by the following definition.
\begin{definition}\label{def:GeneralDef}
\rev{We say that the process  $(\bY_{t})_{t\in I}$ is \emph{an observation process}
with respect to $\bx$  if the following conditions hold:}
\begin{enumerate}
\item[$(i)$] \rev{For each integer $k$, and for each $t_1<t_2<\dots<t_k\in I$,
sequence of random variables $\bx$, $\bY_{t_k}$, $\bY_{t_{k-1}}$,\dots,$\bY_{t_1}$ 
forms a 
Markov chain. Namely the conditional distribution $\prob(\bY_{t_{i-1}}\in\,\cdot\,|
\bx,\bY_{t_{i}},\dots \bY_{t_k})$ coincides with the probability distribution 
$\prob(\bY_{t_{i-1}}\in\,\cdot\,|\bY_{t_i})$.
\item[$(ii)$] The process $(\bY_t:t\in I)$ is complete:
observing the whole path $(\bY_{t})_{t\in I}$ gives complete information about $\bx$.
Formally, any (measurable) set $A\subseteq \reals^n$:}
\begin{align}
\prob(\bx\in A|\bY_{t},t\in I) \in \{0,1\}\, .
\end{align}
\end{enumerate}

Given such a process $\bY_{\cdot}:=(\bY_{t})_{t\in I}$, we define the 
\emph{stochastic localization process} (or \emph{scheme}) to be the sequence of posteriors
$(\mu_{t})_{t\in I}$:
\begin{align}
\mu_{t}(\,\cdot\,) :=  \prob(\bx\in\,\cdot\,|\bY_{t})\, .\label{eq:DefinitionSL}
\end{align}
\end{definition}
We can interpret $\bY_{t}$ as noisy observations or measurements of the underlying random variable
$\bx$, which become less noisy as $t$ increases. The notion of ordering 
among conditional distributions $\prob(\bY_{t_{k}}\in\,\cdot\,|\bX)$, \dots, 
$\prob(\bY_{t_{1}}\in\,\cdot\,|\bX)$ is the well-known `Blackwell ordering'
between statistical experiments \cite{blackwell1953equivalent,le1996comparison}.
The ordering of  random variables  $\bY_{t_k}$, $\bY_{t_{k-1}}$,\dots,$\bY_{t_1}$ 
is also common in the information theory 
literature, and referred to as   `ordering by physical degradation'
\cite{bergmans1973random,RiU08}.
We will henceforth interpret $t$ either as 
`time' or as `signal strength.'

\begin{remark}[Relation to earlier definitions, I]
As we will see in detail below, Definition \ref{def:GeneralDef} encompasses 
various proposals in the machine learning literature, e.g.
\cite{song2019generative,ho2020denoising,song2021score,hoogeboom2021argmax,
austin2021structured,campbell2022continuous,sun2022score}. 
It is worth repeating that, according to Definition \ref{def:GeneralDef}
observations $\bY_t$ can take values in a space different from the one of $\bx_*$,
and in different spaces for different $t$, while this possibility was not 
exploited in earlier works.

Because of this feature, we can in fact construct a scheme satisfying 
Definition \ref{def:GeneralDef} given \emph{any sequence of random variables}
$(\bZ_{t})_{t\in I}$: it is sufficient to define 
$\bY_t = (\bZ_s: s\le t)$.
\end{remark}
\begin{remark}[Relation to earlier definitions, II]
\rev{As already mentioned, the definition if stochastic localization process as a measure valued martingale
$(\mu_t)_{t\ge 0}$ given in Section \eqref{sec:SL} is equivalent to the one of 
Definition \ref{def:GeneralDef}.}

\rev{To see this, note that the measure-valued stochastic process  \eqref{eq:DefinitionSL}
 is a Doob martingale \cite{williams1991probability}. Namely, for any 
 measurable set $A$, $\mu_t(A) = \E(\bfone_A(\bx)|\cF^{\bY}_t)$ where
$\cF^{\bY}_{t}: = \sigma(\{\bY_{t'}:\; t'\le t\})$ is
the $\sigma$-algebra generated by observations up to time $t$. 
Hence $\E[\mu_t(A)|\cF^{\bY}_{s}] = \mu_s(A)$ for any $s\le t$. The completeness condition
further implies $\mu_t\Rightarrow \delta_{\bx}$ (see below).}

\rev{Viceversa, let $\mu_t$ be any measure valued martingale
with respect to a filtration $(\cF_t)$ such that $\mu_t\Rightarrow \delta_{\bx}$ 
with $\bx\sim \mu$ as per the definition of Section \eqref{sec:SL}. 
Then, by Levy convergence theorem, $(\mu_t(A))_{A\in \cA}\to 
(\mu_\infty(A))_{A\in \cA}$ almost surely for any countable collection $\cA$,
whence $\mu_t(A) = \E[\mu_\infty(A)|\cF_t]$ for every $A\in\cA$, and hence for every
measurable set (assuming the underlying space is standard Borel). 
Finally, since $\mu_\infty(A) = \bfone_{\bx\in A}$ for a random $\bx\sim\mu$, we have
 $\mu_t(A)=\prob(\bx\in A|\cF_t)$. We can choose $\tilde\bY_t$ any set of random variables that generates
 the $\sigma$-algebra $\cF_t$ (this always exists  for standard Borel spaces),
 and $\bY_t = (\tbY_s:\, s\le t)$.}
 
  We refer also
 to \cite{el2022information} where this connection was explored in a special case.
\end{remark}

\begin{remark}[Completeness]
\rev{We note that the observation that the process is complete can be restated as
\begin{align}
\mu_{\infty}(A)  \in \{0,1\}\, ,\;\;\; \forall A\in \cB_{\reals^n}\, ,
\end{align}
where $\cB_{\reals^n}$ is the Borel $\sigma$-algebra.
The notation $\mu_{\infty}$ is justified since $\lim_{t\to\infty}\mu_{t}(A)=\mu_{\infty}(A)$
almost surely
(by Levy's martingale convergence theorem).
Since $\mu_{\infty}(A) \in \{0,1\}$ for all $A$, it follows that $\mu_{\infty}(A) =\bfone_{\bx\in A}$,
 $\bx\sim \mu$.}
 
 \rev{Clearly, A sufficient condition for the  requirement to be satisfied is that there exists 
 a measurable function $f$ such that $\lim_{t\to T}f(\bY_t;t)=\bx$ or a function $h$ such that 
 $h(\bY_T)=\bx$.}
\end{remark}

We now make a trivial, yet important remark:
\begin{remark}\label{BasicFact}
\emph{Since $\bx$, $\bY_{t_k}$, $\bY_{t_{k-1}}$,\dots,$\bY_{t_1}, \bY_0$ 
forms a Markov Chain, so is the reverse sequence
$\bY_0,\bY_{t_1}$, $\bY_{t_{2}}$,\dots,$\bY_{t_k}$,
$\bx$.}
\end{remark}
There exists therefore a transition probability $\sP_{t,t'}(\by|A )=
\prob(\bY_{t'}\in A|\bY_{t}=\by)$ indexed by $t,t'\in I\cup \infty$
(with $\bY_{\infty}:=\bx$).
This provides the blueprint for constructing a general sampling scheme:
\begin{enumerate}
\item
Discretize (if necessary) the time index set
to $I_{m}:= (t_0 = 0,t_1,\dots,t_m)$.
\item  Construct approximate probability kernels 
$\hsP_{t_k,t_{k+1}}(\by_k|\,\cdot\, ) \approx \sP_{t_k,t_{k+1}}(\by_k|\,\cdot\, )$.
\item  For each $k\in \{0\,\dots,m\}$,
sample
\begin{align}
\by_{k+1} \sim  \hsP_{t_k,t_{k+1}}(\by_k|\,\cdot\, )\, .
\end{align}
\end{enumerate}
Of course this procedure yields an algorithm only if the transition probability 
$\sP_{t,t'}(\by|\,\cdot\,)$ can be approximated efficiently 
for $t'$ close to $t$. In the next sections we will discuss a few 
special cases.
%
%*************************************************************
%
\section{A dozen examples of sampling schemes}
\label{sec:Examples}

\subsection{The isotropic Gaussian process}
\label{sec:Isotropic}

This is simply the construction of Section \ref{sec:SL}, whose definition we copy here
for the readers' convenience (recall that $\bW$ is a standard Brownian motion)
\begin{align}
\bY_{t} = t \bx_* + \bW_{t}\, ,\;\;\;\; t \in I=[0,\infty)\, 
\end{align}
As we discussed there, this process $(\bY_{t})_{t\in I}$ satisfies the conditions
of an observation process.
The SDE \eqref{eq:SLESDE}, namely
\begin{align}
\de\bY_t &= \bm(\bY_t;t)\, \de t + \de \bB_t\, ,\label{eq:Isotropic}
\end{align}
confirms that it  is a Markov process, 
as anticipated by Remark \ref{BasicFact}. 
An approximate transition probability can be constructed 
by a Euler discretization of this SDE. Namely, given a mesh
$I_{m}:= (t_0 = 0,t_1,\dots,t_m)$, we compute
\begin{align}
\hbY_{k+1} = \hbY_{k+1} + \bm(\hbY_{k};t_k)\, \delta_k+ \bG_k\sqrt{\delta_k}\,,
\;\;\;\;\;\; \delta_k := t_{k+1}-t_k\, ,
\end{align}
where $(\bG_k)_{k\ge 1}\sim_{i.i.d.}\normal(0,\id_n)$.
The corresponding approximate transition probability is 
\begin{align}
\hsP_{t_k,t_{k+1}}(\by_k|\de\by_{k+1})
= \frac{1}{(2\pi \delta_k)^{n/2}}\, \exp\Big\{-\frac{1}{2\delta_k}\big\|
\by_{k+1}-\by_k-\delta_k\bm(\by_{k};t_k)\big\|_2^2\Big\}
\,\de\by_{k+1}\, .
\end{align}
Improved discretizations are given in   \cite{song2020denoising,karras2022elucidating}.

\begin{remark}
\rev{We observe that (under certain conditions) the SDE \eqref{eq:Isotropic} 
implies an equivalent one for 
$\bm_t$. Namely, differentiating $\bm(\bY_t,t)$, and using Ito's formula we get
(assuming $(\by,t)\mapsto\bm(\by,t)$ is twice differentiable in $\by$ and once in $t$)}
\begin{align}\label{eq:FirstEqForM}
\de\bm_t = \bD_{\bY}\bm(\bY_t,t) \big[\bm_t\de t+\de\bB_t\big]+\frac{1}{2}
\bD^{\otimes 2}_{\bY}\bm(\bY_t,t) \{\id\} \de t + \frac{\partial \bm}{\partial t}(\bY_t,t)\de t\, ,
\end{align}
\rev{where $\bD_{\bY}$ is the differential operator and, for
$\bT\in (\reals^{n})^{\otimes 3}$, $\bA\in\reals^{n\times n}$,
we defined  $\bT\{\bA\}\in\reals^n$ via
$\bT\{\bA\}_i :=\sum_{j_1,j_2=1}^nT_{ij_1j_2}A_{j_1j_2}$.}

\rev{Using the expression \eqref{eq:SimpleTilting} it is easy to compute}
\begin{align}
\bD_{\bY}\bm(\bY_t,t) &=   \E\big[(\bx-\bm_t)^{\otimes 2}\big|\bY_t\big]=:\bQ(\bY_t,t)\, ,\\
\bD^{\otimes 2}_{\bY}\bm(\bY_t,t) &=   \E\big[(\bx-\bm_t)^{\otimes 3}\big|\bY_t\big]\, ,\\
 \frac{\partial \bm}{\partial t}(\bY_t,t) & = -\frac{1}{2}
  \E\big[(\bx-\bm_t)\|\bx\|^2_2\big|\bY_t\big]\, . 
\end{align}
\rev{In particular, since $\mu_t$
has finite third moment for $t>0$, the differentiability assumptions 
to derive Eq.~\eqref{eq:FirstEqForM} are justified. Substituting in Eq.~\eqref{eq:FirstEqForM}
yields}
\begin{align}\label{eq:SecondEqForM}
\de\bm_t = \bQ(\bY_t,t)\de\bB_t\, .
\end{align}
\rev{Notice that all terms with non-zero expectation vanished. This is expected since
$\bm_t = \int \bx \, \mu_t(\de\bx)$ and $\mu_t$ is a martingale, whence $\bm_t$ is a martingale.
Equation \eqref{eq:SecondEqForM} is does not provide an evolution for $\bm_t$ unless
we can invert the map $\by\mapsto\bm(\by,t)$. Invertibility holds whenever $\bQ(\by,t)\succ \bzero$
strictly for all $\by$.}
\end{remark}

\subsection{The anisotropic Gaussian process}
\label{sec:Anisotropic}

An obvious generalization is to allow for non-identity covariance.
Namely, for $\bQ:[0,\infty)\to \S_+(n)$ (the cone of $n\times n$ positive semidefinite
matrices) we define
\begin{align}
\bY_{t} = \int_0^t \bQ(s)\bx\,  \de s+ \int_0^t\bQ(s)^{1/2}\, \de\bW_{s}\, ,\;\;\;\; t \in I=[0,\infty)
\, .
\end{align}
This satisfies the SDE
\begin{align}
\de\bY_t &= \bQ(t) \bm(\bY_t;\bOmega(t))\, \de t + \bQ(t)^{1/2} \de \bB_t\, ,
\;\;\;\;\;\;
\bOmega(t):= \int_0^t\bQ(s)\, \de s\, .\label{eq:AnisotropicSDE}
\end{align}
where, for $\bOmega\in\S_+(n)$, we let 
\begin{align}
\bm(\by;\bOmega) := \E[\bx|\bOmega\bx+\bOmega^{1/2}\bG =\by]\, . \label{eq:CondExp}
\end{align}
The following Euler discretization can be used to sample:
\begin{align}
\hbY_{k+1} = \hbY_{k+1} + \bQ(t_k)\bm(\hbY_{k};\bOmega(t_k))\, \delta_k+ 
\bQ(t_k)^{1/2} \bG_k\sqrt{\delta_k}\,,
\;\;\;\;\;\; \delta_k := t_{k+1}-t_k\, ,
\end{align}

\subsection{The erasure process}
\label{sec:Erasure}

For each $i\in [n]$, we let $T_i$ be an independent random variable $T_i\sim \Unif([0,1])$
and set
\begin{align}
Y_{t,i} = \begin{cases}
x_i & \mbox{ if }t\ge T_i,\\
\star & \mbox{ if }t< T_i.
\end{cases}
\end{align}
In this case $I=[0,1]$ and $\bY_{t}\in (\reals\cup \{\star\})^n$ is obtained by `erasing'
independently each coordinate of $\bx$ with probability $1-t$. 

The associated sampling algorithm is the standard sequential sampling
procedure. 
Namely, for $t\in\{1,\dots, n\}$, we sample 
\begin{align}
x_{i(t)} \sim  \mu\big(x_{i(t)}\in \,\cdot\, \big|x_{i(1)},\dots, x_{i(t-1)}\big)\, .
\end{align}
 
 Of course this process can be modified by choosing the revealing times $T_i$
 to be a deterministic sequence. In that way we obtain sequential sampling
 with  $i(1)$, \dots $i(n)$ any predefined order.
 
 \subsection{The binary symmetric  process}
 \label{sec:BSC}

We next give a (continuous-time) reformulation of the binary sampling scheme of 
\cite{sohl2015deep,hoogeboom2021argmax}.

We assume $\bx\in\{+1,-1\}^n$, $\bx\sim \mu$ and set (with $\odot$ the Hadamard product)
\begin{align}
\bY_t = \bx\odot \bZ_t\, , 
\end{align}
where $(\bZ_t)_{t\in I}$, $I=[0,1]$ is a suitable noise process
 taking values in $\{+1,-1\}^n$.
 Before defining the process, we highlight that 
 $Z_{t,i}\in\{+1,-1\}$ with $\E Z_{t,i} = \prob(T_i<t) =t$. Equivalently
\begin{align*}
\prob(Z_{t,i} = +1) = 1- \prob(Z_{t,i} = -1)  = \frac{1+t}{2}\, .\label{eq:MarginalBernoulli}
\end{align*}
In particular $\bY_0$ is uniformly random in $\{+1,-1\}^n$, and $\bY_1=\bx$.
In other words, the signal-to-noise ratio becomes larger as $t$ grows from $0$ to
$1$. 
  
Informally, the process $(\bZ_t)_{t\in I}$ is defined by the fact that its coordinates
are independent and identically distributed with each coordinate defined as follows.
Start with $Z_{i,1}=+1$, and generate $Z_{i,t}$ proceeding 
backward in time. For each interval $(t-\delta,t]$ replace $Z_{t,i}$
with a fresh random variable independent of the $(Z_{s,i})_{s\ge t}$ with probability
$\delta/t +o(\delta)$. 
  
It is clear from this definition that, for any $t_1<t_2<\dots<t_k$,
$\bZ_{t_k},\bZ_{t_{k-1}}\dots, \bZ_{t_1}$ forms  a Markov chain, and hence so does
$\bx,\bY_{t_k},\bY_{t_{k-1}}\dots, \bY_{t_1}$.

Further, calling $T_{i,1}$ the first time (proceeding backward from $1$) at which 
$Z_{i,t}$ is resampled, if follows from the definition that 
$\prob(T_{i,1}<t) =\exp(-\int_{[t,1]}s^{-1}\de s)=t$.
In other words, $T_{i,1}$ is uniformly random in $[0,1]$,
whence Eq.~\eqref{eq:MarginalBernoulli} immediately follows.
\begin{remark}
This remark provides a more rigorous definition of the process $(\bZ_t)_{t\in [0,1]}$.
Let, independently for each $i\le n$, $T_{i,1}>T_{i,2}>\cdots$ be the arrival times of a Poisson 
process with density $\nu(\de t) = \bfone_{[0,1]}(t)t^{-1}\de t$,
and let $\{R_{i,\ell}\}_{\ell\ge 1}$,  $R_{i,\ell}\sim \Unif(\{+1,-1\})$.
Further, define $R_{i,0} = +1$, $T_{i,0}=1$. 
We then set
\begin{align}
Z_{t,i} = R_{i,\ell}\;\;\Leftrightarrow \;\; 
T_{i,\ell}\ge t > T_{i,\ell+1}\, .
\end{align}
\end{remark}

\begin{remark}
An equivalent definition is as follows. Let $(\bX_s)_{s\ge 0}$
be continuous random walk in the hypercube started at $\bX_0=\bx$. Namely,
within any interval $[s,s+\delta)$, with probability $\delta+o(\delta)$,
coordinate $i$ is replaced independently from the others
by a uniformly random variable in $\{+1,-1\}$.

We then set $\bY_t = \bX_{\log(1/t)}$, for $t\in(0,1]$.
\end{remark}

In agreement with our general Remark \ref{BasicFact}, the process $(\bY_t)_{t\in[0,1]}$ is also 
Markov forward in time. Indeed it is a continuous-time Markov
chain initialized at $\bY_0\sim \Unif(\{+1,-1\}^n)$. In the interval $[t,t+\delta)$, coordinate
$i$ of $\bY_t$ flips, independently of the others with probability 
(here $\by^{(i)}$ is defined by $y^{(i)}_i=-y_i$ and  $y^{(i)}_j=y_j$ for $j\in [n]\setminus i$):
\begin{align}
\prob(\bY_{t+\delta} = \by^{(i)}|\bY_{t} = \by) = p_i(\by;t)\,\delta +o(\delta)\, .
\end{align}
The transition rates are given by
\begin{align}
p_i(\by;t) &= \frac{1+t^2}{2 t(1-t^2)}-\frac{1}{1-t^2}y_i\, m_i(t;\by)\, ,\\
m_i(t;\by)& := \E[x_i|\bY_t=\by]\, .
\end{align}

\subsection{The symmetric process}
\label{sec:SC}

We can generalize the previous process to the case of a $q$-ary alphabet
$x_i\in [q]=\{1,\dots, q\}$, the result being equivalent to the process 
introduced in \cite{hoogeboom2021argmax}.
As before, $I=[0,1]$ and, for each $i\in [n]$, we let $\{T_{i,\ell}\}_{\ell\ge 1}$ be an independent 
Poisson point process with rate $\nu(\de t) = \bfone_{[0,1]}(t)t^{-1}\de t$,
and $\{R_{i,\ell}\}_{\ell\ge 1}$ an independent sequence of random variables $R_{i,\ell}\sim \Unif([q])$.
We then set
\begin{align}
Y_{t,i} = \begin{cases}
x_i & \mbox{ if } T_{i,1}<t_i\le 1,\\
R_{i,\ell}& \mbox{ if }T_{i,\ell+1} < t\le T_{i,\ell}.
\end{cases}
\end{align}
As noted before $T_{i,1}\sim \Unif([0,1])$ and therefore $(Y_{t,i})_{i\le n}$
are conditionally independent given $\bx$, with
\begin{align}
\prob(Y_{t,i}=y |\bx) = \begin{cases}
(1+(q-1)t)/q & \mbox{ if } y=x_i\, ,\\
(1-t)/q& \mbox{ if } y\neq x_i\, .\\
\end{cases}
\end{align}

Again, by the general Remark \ref{BasicFact}, the process $(\bY_t)_{t\in[0,1]}$ is a 
Markov forward in time. For $\by\in [q]^n$, $z\in [q]\setminus\{y_i\}$,
let $\by^{(i,z)})_j = y_j$ if $j\neq i$, and $\by^{(i,z)})_i = z$. Then 
the transition rates are given by 
\begin{align}
\prob(\bY_{t+\delta} = \by^{(i,z)}|\bY_{t} = \by) &= p_i(\by, z;t)\,\delta +o(\delta)\, ,\\
\prob(\bY_{t+\delta} = \by|\bY_{t} = \by) &= 1-\sum_{z\in [q]\setminus\{y_i\}}p_i(\by, z;t)\,\delta +o(\delta)\, ,
\end{align}
where
\begin{align}
p_i(\by,z;t) &= \frac{1}{qt}+\frac{1}{1-t} b_i(\by,z;t)-\frac{1}{1+(q-1)t} b_i(\by,y_i;t)\, ,\\
b_i(\by,z;t)& := \prob(x_i=z|\bY_t=\by)\, .
\end{align}

\subsection{The linear observation process}
\label{sec:Linear}

For a fixed matrix $\bA\in\reals^{m\times n}$ and an $m$-dimensional 
standard Brownian motion $(\bB_t)_{t\ge 0}$, we observe \cite{montanari2023posterior}
\begin{align}
\bY_t = t\, \bA\bx+\bB_t\, ,
\end{align}
where $\bx\sim\mu$.
By the same argument in Section \ref{sec:Isotropic}, $\bY_t$ satisfies   the SDE
\begin{align}
\de\bY_t = \bm_{\bA}(\bY_t;t)\, \de t +\de\bB_t\, ,\label{eq:ADiffusion}
\end{align}
where $\bm_{\bA}(\bY_t;t)$ is the minimum mean square error estimator of $\bA\bx$
\begin{align}
\bm_{\bA}(\by;t) =\E\big\{\bA\bx\big| t\bA\bx+\sqrt{t}\bG =\by\big\}\, .
\end{align} 

By discretizing Eq.~\eqref{eq:ADiffusion} as in Section \ref{sec:Isotropic},
we can sample $\obY_{\infty}$, an approximation of $\bY_{\infty} =\bA\bx$. Note that, unlike for
the original construction, once the diffusion process is terminated, we still
need to generate $\bx$ from the $\obY_{\infty} =\bA\bx+{\rm error}$, in a way that is robust to 
sampling errors. 
Two examples in which this can be done easily:
\begin{itemize}
\item $\bA$ has full column rank (in particular, $m\ge n$).
Then we can output $\bx := \bA^{\dagger}\obY_{\infty}$ (with 
$ \bA^{\dagger}:=(\bA^{\sT}\bA)^{-1}\bA^{\sT}\obY_{\infty}$ the pseudoinverse of $\bA$).
\item $\bA$ does not have full column rank (for instance, $m< n$),
but $\bx$ is structured, for instance is sparse. In this case,
we can find $\bx$ by using compressed sensing techniques, e.g. by solving
\begin{align}
\mbox{minimize} & \;\;\; \|\bx\|_1\, ,\\
\mbox{subj. to} & \;\;\; \bA\bx = \obY_{\infty}\, .
\end{align}
\end{itemize}

An alternative, construction would be instead to add noisy linear measurements.
In this case time $t\in \naturals$ is discrete, $\bY_t\in \reals^t$ and
\begin{align}
\bY_{t} & = (Y_1,\dots,Y_t)\, ,\\
Y_t &= \<\ba_t,\btheta\>+ \eps_t\, ,\;\;\; \eps_t\sim\normal(0,\sigma^2)\, .
\end{align}
Here $\ba_1,\ba_2,\dots \in\reals^n$ is a sequence of vectors generated with a 
predefined process (either deterministic or random).
The transition probability of this Markov chain is given by
\begin{align}
\prob(Y_{t+1}\le a|\bY_{t}=\by) = 
\int\! \Phi\Big(\frac{a-s}{\sigma}\Big)\, \nu_t(\de s|\bY_{t}=\by) 
\end{align}
where $\Phi(u) :=\int_{-\infty}^u\exp(-v^2/2)/\sqrt{2\pi}\, \de v$ is the standard Gaussian 
distribution and $\nu_t(\,\cdot\,|\bY_{t}=\by)$ is the conditional law of
$\<\ba_{t+1},\bx\>$ given $\bY_t=\by$.

\subsection{The information percolation process}
\label{sec:Percolation}

Let $\bx\in\integers^{m\times n}$ be a grayscale image and 
$G_{m,n}=(V_{m,n},E_{m,n})$ be the two-dimensional grid with vertex set $\{0,\dots,m\}\times\{0,\dots,n\}$.
For each edge $e\in E_{m,n}$, choose a direction arbitrarily: $e=(o,t)$,
and further order the edge set arbitrarily: $E_{m,n} = (e(1),\dots,e(N) )$,
$e_{\ell}= (o(\ell), t(\ell))$, 
$N= 2mn+m+n$. Let 
\begin{align}
  \bY(\ell) = \big(x_{t(1)}-x_{o(1)},\dots, x_{t(\ell)}-x_{o(\ell)}, \star,\dots,\star \big)
\end{align}
In words, at time $\ell$, we revealed the difference of values
along the first $\ell$ edges.
It is easy to check that this satisfies the conditions of our general construction,
indeed it is a simple change of variables  of the erasure process of Section \ref{sec:Erasure}.

The transition probabilities are easy to compute
\begin{align}
\prob\big( \bY(\ell)_{\ell+1} = y \big|\bY(\ell)\big) = 
\prob\big(x_{t(\ell+1)}-x_{o(\ell+1)} = y\big|\bY(\ell)\big)\, .
\end{align}
In other words, at each step, one needs to compute the conditional distribution of 
$(x_{t(\ell+1)}-x_{o(\ell+1)})$ given the information graph revealed thus far.

\subsection{The Poisson observation process}
\label{sec:Poisson}

In this case, we assume that $\bx\in\reals_{\ge 0}^{n}$ is non-negative,
and let $\bY_t\in\naturals^n$, with coordinates conditionally independent given 
$\bx$, and $(Y_{t,k})_{t\ge 0}$ for each $k$ a Poisson Point Process ($\PPP$)
of rate $x_k$
\begin{align}
(Y_{t,k})_{t\ge 0} \big|_{\bx}\sim \PPP(x_k\de t)\, .
\end{align}
Informally, $Y_{0,k}=0$ and $Y_{t,k}$ is incremented by one in the
interval $[t,t+\de t)$ independently with probability $x_k\, \de t$.   
In particular, for each $k$, $t$, $Y_{t,k}\sim \Poisson(t x_k)$

The transition probabilities are given by
\begin{align}
\prob\big(\bY_{t+\delta} &= \by \big|\bY_t=\by\big)  = 1-\delta \sum_{k=1}^nm_k(t;\by) +o(\delta)\, ,\\
\prob\big(Y_{t+\delta,k} &= y_k+1\big|\bY_t=\by\big)  = \delta\, m_k(t;\by) +o(\delta)\, .
\end{align}
where, as before, $m_k(t;\by):= \E[X_k|\bY_t=\by]$. 

\subsection{The half-space process}
\label{sec:Rectangle}

Let $\bx\in\reals^n$ and $\{\cH_\ell\}_{\ell\ge 1}$ be a sequence of half spaces
in $\reals^n$.
Namely, $\cH_k:=\{\bz\in\reals^n: \<\ba_k,\bz\>\ge b_k\}$, for some $\ba_k\in\reals^n$,
$b_k\in\reals$. 
For $\ell\ge 0$, we let
\begin{align}
\bY_\ell = \big(\bfone_{\bx\in\cH_1},\dots, \bfone_{\bx\in\cH_\ell}\big)\, ,.
\end{align}
Note that at step $\ell$, $\bY_{\ell}$ is a binary vector of length $\ell$.

\subsection{Revisiting reverse diffusions}
\label{sec:Reverse}

\rev{In Section \ref{sec:FirstDiffusions}, we introduced the general 
diffusion (noising) process \eqref{eq:FirstDiffusion} and explained that its
time reversal is also a diffusion process \eqref{eq:ReversedDiffusion} which can be 
used for sampling (provided it can be approximated numerically).
We then showed that, if the forward process is a Ornstein-Uhlenbeck process,
of Eq.~\eqref{eq:LinearForwardDiffusion}, then its time reversal is equivalent to the 
stochastic localization process of Section \ref{sec:Isotropic}.}

\rev{This connection can be generalized and indeed any `reverse diffusion' as given in
\eqref{eq:ReversedDiffusion} is equivalent to an instance of a stochastic
localization process as introduced in Section \ref{sec:GeneralSL}. The fundamental
reason very simple and general. For $t\in [0,T]$ we  define
\begin{align}
\bY_{t} = \big(\bZ_s: \, s\in [\fs(t),S]\big)\, .\label{eq:ReverseDiffusion}
\end{align}
Then, it is easy to check that $(\bY_t)_{t\in [0,T]}$ is an observation process.
Indeed  $\bx, \bY_{t_k},\dots,\bY_{t_1}$ form a Markov chain for any $t_1\le t_2\le \dots\le t_k$,
because $\bY_{t_{i}}$ can be computed from $\bY_{t_{i+1}}$ by `deleting' 
the portion $(\bZ_s: \, s\in [\fs(t_{i+1}),\fs(t_{i})))$. Further 
$\bx$ is obviously measurable on $\bY_S = \big(\bZ_s: \, s\in [0,S]\big)$. Hence 
$\mu_S = \delta_{\bx}$, $\bx\sim\mu$.}

\rev{At first sight, it is not obvious that the observation process is equivalent 
to the reverse diffusion \eqref{eq:ReverseDiffusion}. However, also this can be explained
without calculation. Notice that $\bY_t$ defined in Eq.~\eqref{eq:ReverseDiffusion}
is a path that we can view as indexed by $s\in [\fs(t),S]$ or, equivalently, 
$\theta\in [0,T]$: we denote this trajectory by $\bY_{t}(\theta)$, $\theta\in [0,t]$
where $\bY_{t}(\theta)= \bZ_{\fs(\theta)}$.}

\rev{In particular, $\bY_t(t)=\obY_t$ and therefore $\obY_t$ is a measurable
function of $\bY_t$. Vice versa, given $\bY_{t} =(\obY_\theta:\; \theta\in [0,t])$,
we can generate $\bY_{t+\delta}$ by integrating the SDE   \eqref{eq:ReversedDiffusion}
over $(t,t+\delta]$, with initialization $\obY_t=\bY_t(t)$.}

\rev{Finally we point out that the construction presented year can be easily modified to 
yield stochastic localization processes that do not fit the time reversal construction of 
Section \ref{sec:FirstDiffusions}. For instance we could observe the process $\bZ_s$
 not on a sub-interval $[\fs(t),S]$ as in Eq.~\eqref{eq:ReverseDiffusion} 
 but on a more general set increasing with $t$.}
 
\subsection{The Eucldean invariant process}
 \label{sec:Euclidean}
 
 \rev{Assume that $\bx$ represents the positions of $m+1$ particles (e.g. atoms in 
 a molecule) $\bx^{(0)},\bx^{(1)},\dots,\bx^{(m)}\in\reals^d$. We are interested in these 
 positions up to an element of the euclidean group $E(d)$. In order to remove the invariance 
 under translations, we can assume without loss of generality that $\bx^{(0)} = \bzero$.
 We thus set $\bx = (\bx^{(1)},\dots,\bx^{(m)})$, $n=md$ and are inderested in sampling
 from $\mu$ which is invariant under simultaneous
 orthogonal transformations $\bx\mapsto \bT_{\bU}\bx:= (\bU\bx^{(1)},\dots,\bU\bx^{(m)})$,
 $\bU\in\cuO(d)$ (the orthogonal group in $\reals^d$).}
 
 \rev{It can be useful to construct the observation process $\bY_t$ in
 such a way that $\mu_t$ remains invariant for all $t$. One possible approach is
 to make the observation process depend only on their Gram matrix $\bG(\bx)\in\reals^{m\times m}$, 
 $G_{ij}(\bx) = \<\bx^{(i)},\bx^{(j)}\>$. The simplest example is of course}
 \begin{align}
 \bY_t = t\bG(\bx) +\bB_t\, ,
 \end{align}
 {where $(\bB_t)$ is a standard Brownian motion valued in $m\times m$ matrices.}
 
 \rev{The resulting scheme generate a random Gram matrix $\bG=\bG(\bx)$,
 for $\bx\sim \mu$. This can be postprocessed to obtain $\bx$ via 
 decomposition $\bG=\bX^{\sT}\bX$ for $\bX\in\reals^{d\times m}$, and then reading out 
 the columns of $\bX$ as particles' positions. }
  
\subsection{All of the above}
\label{sec:All}

One useful property of the present approach is that it provides 
a natural way to combine two observation processes in a new one.
Namely, given observation processes $(\bY^{(1)}_t)_{t\in I_1}$,  $(\bY^{(2)}_t)_{t\in I_2}$,
with $I_1,I_2\subseteq\reals_{\ge 0}$,
we can combine them by defining
\begin{align}
\bY_{t}=\big(\bY^{(1)}_s,\bY^{(2)}_s:\; s\le t \big)\, .
\end{align}
This clearly satisfies the conditions of Definition \ref{def:GeneralDef}.

\rev{As an example let $\bY^{(1)}_t$ be the anisotropic Gaussian process of 
section \ref{sec:Isotropic}, and $\bY^{(2)}_t$ the erasure process of Section 
\ref{sec:Erasure}, which we modify by letting $T_i\sim \Exp(1)$ instead of $T_i\sim \Unif([0,1])$
so that the process $(\bY^{(2)}_t)$ is indexed by $t\in[0,\infty]$. 
The choice $T_i\sim \Exp(1)$  corresponds to the following picture. In a small time interval 
$[t,t+\delta]$, each coordinate $x_i$ is revealed independently with probability $\delta+o(\delta)$.
If it was revealed already before, $Y^{(2)}_t$ does not change.
We thus have}
\begin{align}
\bY_{t}^{(1)} &= t\bx\, + \bW_{t}\, ,\\
Y^{(2)}_{t,i} &= \begin{cases}
x_i & \mbox{ if }t\ge T_i,\\
\star & \mbox{ if }t< T_i.
\end{cases}
\end{align}
\rev{and can represent $\bY_t$ simply as the concatenation of these two vectors
 $\bY_t = (\bY^{(1)}_t,\bY^{(2)}_t)$ (because we can check that the future is independent 
 of the past given these vectors.)}

\rev{The two processes $\bY_{t}^{(1)}$, $\bY_{t}^{(2)}$ are conditionally independent given 
$\bx$, but are not independent unconditionally. As a consequence, the transition probabilities
are ot in product form. Intuitively, the fact that we observe both
$\bY_{t}^{(1)}$ and $\bY_{t}^{(2)}$ changes the conditional distribution of $\bx$
given $\bY_{t}^{(1)}$ and $\bY_{t}^{(2)}$, and hence the conditional distribution of
$\bY_{t+\delta}=(\bY_{t+\delta}^{(1)},\bY_{t+\delta}^{(2)})$.}

\rev{It is simple to derive the new transition probabilities for the present example.
The component $\bY^{(1)}_t$ satisfies $\bY^{(1)}_0=\bzero$ and}
\begin{align}
\de\bY^{(1)}_t &= \bm(\bY^{(1)}_t,\bY^{(2)}_t;t)\, \de t+\de\bB_t\, ,\\
\bm(\by^{(1)},\by^{(2)};t) &= \E\big[\bx\big|t\bx+\sqrt{t}\bG = \by^{(1)}; \bx|_{S(\by^{(2)})}=\by^{(2)}|_{S(\by^{(2)})}\big]\, ,
\end{align}
\rev{where $S(\by^{(2)}):=\{i\in [n]: \, y^{(2)}_i\neq \star\}$ is the set of coordinates
that are not `erased' in $\by^{(2)}$, and $\bv|_S$ denote the restriction of vector $\bv$ to
coordinates in $S$. In words, the SDE \eqref{eq:Isotropic} is modified in that the drift 
$\bm(\bY_t^{(1)},\bY_t^{(2)};t)$ takes into account the observation of $\bY^{(2)}_t$.}

\rev{The evolution of $\bY^{(2)}_t$ is modified similarly. It is initialized as
$\bY^{(2)}_0=(\star,\dots,\star)$ and, for a small time interval $\delta$, and for each $i\in[n]$}
\begin{align}
Y_{i,t}^{(2)} = \star\;\;&\Rightarrow\;\;
Y_{i,t+\delta}^{(2)} = \begin{cases}
\star & \mbox{ with probability $1-\delta-o(\delta)$,}\\
x^*_i  & \mbox{ with probability $\delta+o(\delta)$,}
\end{cases}\\
Y_{i,t}^{(2)} \neq \star\;\;&\Rightarrow\;\;
Y_{i,t+\delta}^{(2)} = Y_{i,t}^{(2)} \, ,
\end{align}
where 
\begin{align}
x^*_i\sim \prob\big(x_i\in\,\cdot\,|\bY^{(1)}_t,\bY^{(2)}_t\big)\, .
\end{align}

\rev{Unsurprisingly, generating the process $\bY_t = (\bY^{(1)}_t,\bY^{(2)}_t)$
require computing expectations and marginal probabilities with respect to
$\mu_t(\,\cdot\, )= \prob\big(\bx\in\,\cdot\,|\bY^{(1)}_t,\bY^{(2)}_t\big)$.
For the sake of clarity, we write the form of this conditional for the case in which 
$x_i$ takes value in a finite alphabet $\cX\subseteq \reals$ (using $\mu$, $\mu_t$ for the corresponding 
probability mass
 functions):}
\begin{align}
\mu_t(\bx) = \frac{1}{Z(\bY_t)}\, \mu(\bx)\, \exp\Big\{
-\frac{1}{2t}\big\|t\bx-\bY^{(1)}_t\big\|\Big\}\, \prod_{i:Y_{i,t}^{(2)}\neq\star}
\bfone_{x_i=Y_{i,t}^{(2)}}\, .
\end{align}
\rev{with $Z(\bY_t)$ fixed by the normalization condition $\sum_{\bx\in\cX^n}\mu_t(\bx)$.
More generally, if the $\bY^{(1)}_t$,  $\bY^{(2)}_t$ have conditional densities 
$\sP_{1,t}(\,\cdot\,|\bx)$,
$\sP_{2,t}(\,\cdot\,|\bx)$ with respect to some fixed reference measure $\nu$, then}
\begin{align}
\mu_t(\de \bx) = \frac{1}{Z(\bY_t)}\mu(\de\bx)  
\cdot \sP_{1,t}(\bY^{(1)}_t|\bx)\cdot \sP_{2,t}(\bY^{(2)}_t|\bx)\, .
\end{align}

%
%****************************************************
%
\section{The role of the sampling scheme: Generating from a mixture}
\label{sec:Multimodal}

\rev{This section presents a simple illustration of the
importance the choice of the specific stochastic localization scheme.
We consider the problem of sampling from a mixture of two Gaussians in high-dimension,
which we regard as an instance of high-dimensional distribution with a low-dimensional
latent variable.}
\rev{We show that the isotropic Gaussian process of Section
\ref{sec:Isotropic} runs into difficulties because the denoiser $\bm(\,\cdot\,,t)$
must be estimated accurately in a low-probability region. In contrast, a stochastic Localization
process that is aware of the low-dimensional latent structure succeeds much more easily. }

More concretely, we consider a mixture of two well-separated Gaussians in $n$ dimensions with centers
$\ba_1$, $\ba_2\in\reals^n$, and weights $p_1=p$, $p_2=1-p$.
For simplicity, we will assume the two Gaussians to have common (known) covariance
that therefore we can assume to be equal to identity, and that the overall mean 
$p\ba_1+(1-p)\ba_2$ is known. Therefore the mean can be removed from the data
and we are left with the simple model
\begin{align}
\mu = p\cdot \normal((1-p)\ba;\id_n) +  (1-p)\cdot \normal(-p\ba;\id_n) \, .
\end{align}
where $\ba := \ba_1-\ba_2$. We will further assume that $p\in (0,1)$ is independent of $n$,
 and that the radius of each of these Gaussians (which is of order $\sqrt{n}$) is of the same order
as the norm $\|\ba\|_2$. (These assumptions are mainly introduced for convenience of presentation.)

In Figure \ref{fig:Mixture_Isotropic}, we display attempts to sample from $\mu$ using 
isotropic diffusions,
i.e. the process of Eq.~\eqref{eq:SLESDE}. We use $\ba = \bfone$, $p=0.7$, $n=128$.
Each row is obtained using a different model for the posterior expectation
$\bm(\by;t)$, and reports the histogram 
of $\<\bX_t,\ba\>/\|\ba\|_2^2$ obtained by $1000$ independent runs of the generation process.
Here $\bX_t=\bm(\bY_t;t)$ is the sample generated at time $t$.
These empirical results are compared with the correct distribution 
$\<\bX,\ba\>/\|\ba\|_2^2$  under $\bX\sim\mu$.

\begin{algorithm}
\caption{Forward function; 2-Layer fully connected denoiser (for Gaussian mixture)}\label{alg:FirstFC}
\DontPrintSemicolon
\SetKwFunction{Forward}{Forward}
\SetKwProg{Fn}{Function}{:}{}
\Fn{\Forward{$\bx\in\reals^{n}, \alpha\in [0,\pi/2]$}}{
$\bphi \gets (\cos(\alpha\cdot i), \sin(\alpha\cdot i); i\le 20 )$\;
$\bs \gets \Lin_0(\bphi)$\;
$\bx_1\gets   \ReLU\circ \Lin_1(\bx)$\;
$\bx_2 \gets {\sf Flatten}(\bs\otimes\bx_1)$\;
$\bx_{\out} \gets \cos(\alpha)  \bx+\Lin_2(\bx_2)\in \reals^{n}$\;
\Return{$\bx_{\out}$}\;
}
\end{algorithm}
The four models used  to generate data in Figure \ref{fig:Mixture_Isotropic}
have the same architecture, namely a two-layer fully connected ReLU network
with $m$ hidden nodes, three $L\times 20$ linear layers encode time dependence,
and a skip connection. 
Pseudocode for this architecture is given as Algorithm \ref{alg:FirstFC}, whereby:
\begin{itemize}
\item We encode $t$ in terms of the angle variable $\alpha = \arctan(1/\sqrt{t})$.
\item $\Lin_i$ is a fully connected linear map,
with $\Lin_0:\reals^{40}\to\reals^{L}$, $\Lin_1:\reals^{n}\to\reals^{m}$,
 $\Lin_2:\reals^{mn}\to\reals^{n}$.
\item $\otimes$ denotes tensor (outer) product. 
\end{itemize}

We trained on $N$ samples from $\mu$.
Parameters
where chosen as follows (from top to bottom):
$(i)$~$N=5,000$, $500$ epochs, $L=3$, $m=256$;
$(ii)$~$N=20,000$, $500$ epochs, $L=3$, $m=256$;
$(iii)$~$N=20,000$, $2000$ epochs, $L=3$, $m=256$;
$(iv)$~$N=20,000$, $2000$ epochs, $L=6$, $m=512$.
 (We refer to Appendix \ref{sec:OmittedSamplingGaussians} for further details.)
 It is quite clear that the distribution generated is significantly different from the target
 one along this one-dimensional projection.  

\begin{figure}[t]
\begin{center}
\includegraphics[width=0.95\linewidth]{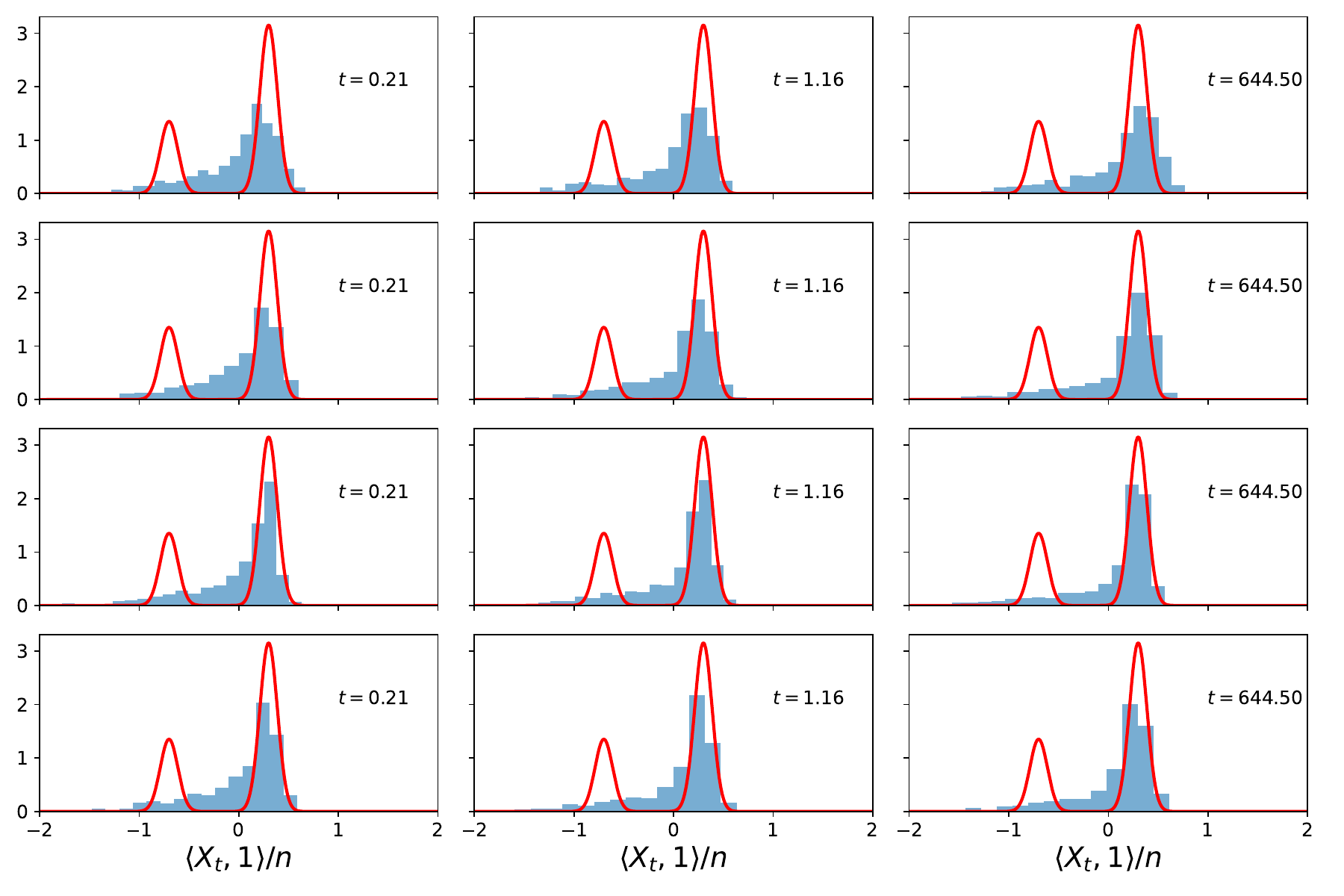}
		\caption{Generating from a mixture of two Gaussians in $n=128$ dimensions using isotropic diffusions
		We compare the empirical distribution of the projection along the direction of the means difference,
		with the correct distribution.
		Each row corresponds to a different model for the posterior mean, and each column to a different
		time in the generation process.}
		\label{fig:Mixture_Isotropic}
\end{center}
\end{figure}

\begin{figure}[t]
\begin{center}
\includegraphics[width=0.95\linewidth]{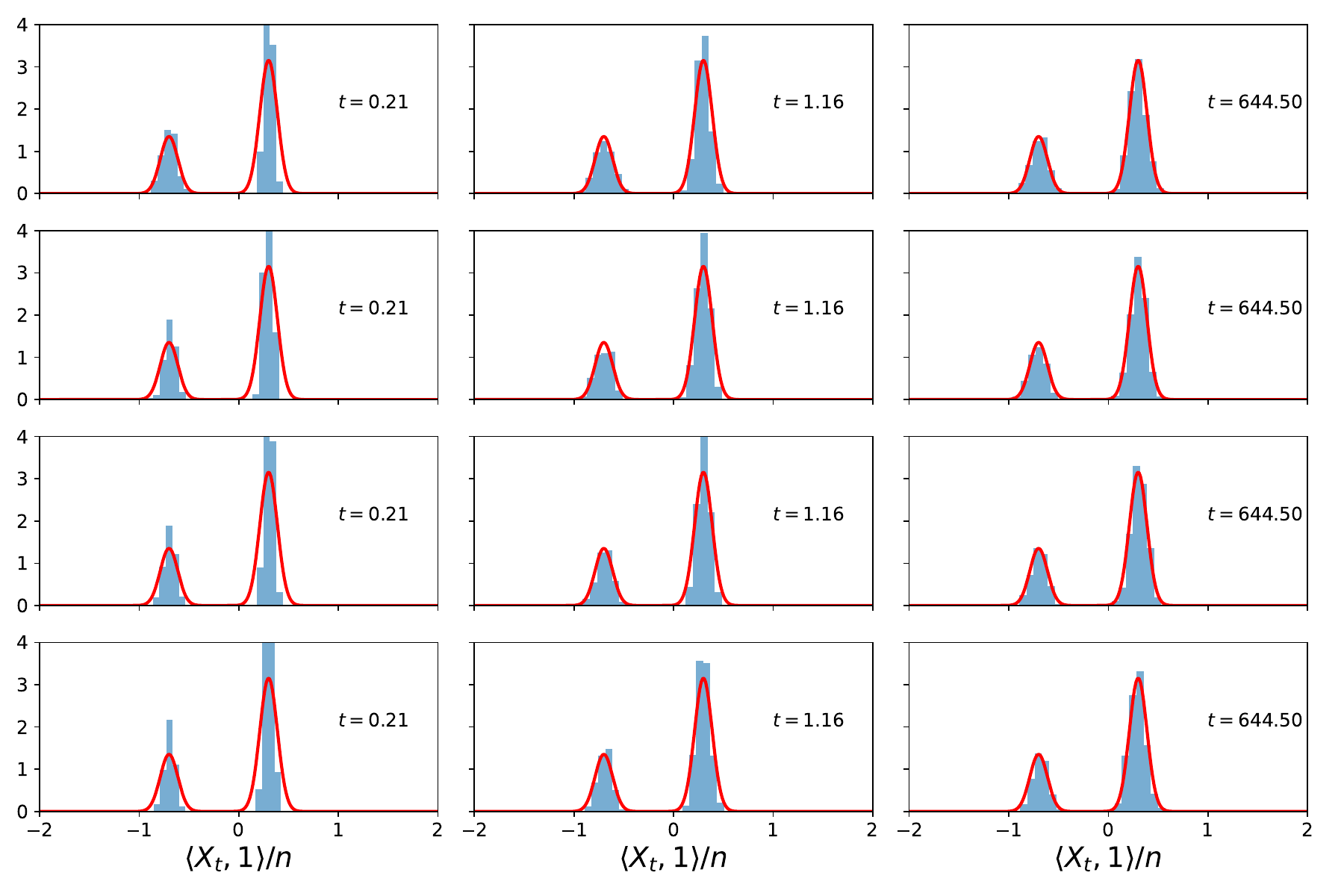}
		\caption{Generating from a mixture of two Gaussians in $n=128$ dimensions.
		The setting and network architecture are the same as in Fig.~\ref{fig:Mixture_Isotropic},
		although the generating process is different. Alongside the original data perturbed by Gaussian noise,
		we reveal $\<\bv,\bx\>$ for a fixed vector $\bv$.}
		\label{fig:Mixture_Obs}
\end{center}
\end{figure}
In Figure \ref{fig:Mixture_Obs} we repeat the experiment keeping the same data generation process
and same network architecture, but introducing a small change in the generation process.
 Given data $\bx_1,\bx_2,\dots,\bx_N$, we compute the principal eigenvector
 $\bv$ of the empirical covariance $\hbSigma:=\sum_{i=1}^n\bx_i\bx_i^{\sT}/n$.
 and the empirical fraction $\hq$ of samples that have a positive projection onto this vector.
Namely, $\hq:=\#\{i\le N:\<\bx_i,\bv\>\ge 0\}/N$. Further, two distinct denoisers
$\bm_+(\by;t)$, $\bm_-(\by;t)$ are learnt respectively 
from the samples $\bx_i$ such that $\<\bx_i,\bv_1\>\ge 0$ and from those such that 
$\<\bx_i,\bv_1\>< 0$. The estimated probability $\hq$ is stored along side the neural networks
for $\bm_+$, $\bm_-$. 

The generation process then proceeds as follows:
\begin{enumerate}
\item Sample $S\in\{+1,-1\}$ with probabilities $\prob(S=+1)=\hq = 1-\prob(S=-1)$.
\item Generate $\bY_t$, $t\ge 0$ running the isotropic diffusion process 
with denoiser $\bm_S(\,\cdot\,;t)$.
\item Return $\bX_T = \bm_S(\bY_T;T)$ for some large $T$. 
\end{enumerate}
It is straightforward to see that this is a combination 
(in the technical sense of Section \ref{sec:All}) of the isotropic process
of Section \ref{sec:Isotropic} and the half-space process of Section \ref{sec:Rectangle}. 
A related idea was developed in \cite{montanari2023posterior}

Figure \ref{fig:Mixture_Obs} demonstrates that the modified process produces a distribution that
matches better the target along direction $\ba$. 
While it is likely that similar results could have been obtained without
changing the sampling scheme, using a more complex architecture to approximate $\bm(\by;t)$,
the new sampling scheme simplifies this task and offers a convenient alternative.

What is the origin of the difficulty in sampling via isotropic diffusions?
A simple calculation shows that the posterior expectation takes the form
\begin{align}
\bm(\by;t) &= \frac{\by}{1+t}+\ba\, \varphi\Big(\frac{\<\ba,\by\>}{\|\ba\|^2};t\Big)\, ,\\
\varphi(s;t) &:= \frac{p(1-p)}{1+t} 
\frac{e^{\big(\frac{(1-p)s}{1+t}-\frac{t(1-p)^2}{2(1+t)}\big)\|\ba\|^2}
-e^{-\big(\frac{ps}{1+t}+\frac{t p^2}{2(1+t)}\big)\|\ba\|^2} }
{pe^{\big(\frac{(1-p)s}{1+t}-\frac{t(1-p)^2}{2(1+t)}\big)\|\ba\|^2 }
+(1-p)e^{-\big(\frac{ps}{1+t}+\frac{t p^2}{2(1+t)}\big)\|\ba\|^2}}\, .
\end{align}
It is easy to see that such a function can be accurately approximated
by a ReLU network with one hidden layer.

However, when $\|a\|= \Theta(\sqrt{n})$ is large, we have the following  behavior.
For any constant $\Delta$
\begin{align}
\bm(\by;t) &= 
\begin{dcases}
\frac{\by+(1-p)\ba}{1+t} +O(1/n)&
 \mbox{ if } \frac{\<\ba,\by\>}{\|\ba\|^2}\ge(1-2p)t +\Delta\, ,\\
\frac{\by-p\ba}{1+t} +O(1/n)&
 \mbox{ if } \frac{\<\ba,\by\>}{\|\ba\|^2}\le (1-2p)t -\Delta\, .
\end{dcases}
\end{align}
Further, the two behaviors are matched on a window of size $\Theta(1/\|\ba\|^2)= \Theta(1/n)$
around  $\<\ba,\by\>/ \|\ba\|^2=(1-2p)t$, which corresponds to the midpoint
between the two cluster centers, scaled by $t$. The derivative of $\varphi$ with respect to its first
argument in this window is positive and of order $n$: as a consequence, the evolution along 
the direction $\ba$ is highly sensitive to correctly estimating $\varphi$.

%
%****************************************************
% 
\section{The role of architecture: Images with long range correlations}
\label{sec:ToyNum}

In this section, we illustrate
the interplay between sampling process and the network
architecture by considering a simple numerical example. 

We generate synthetic RGB images $\bx_i\in \reals^{3\times w\times h}$, $i\in \{1,\dots, n\}$
according to a simple distribution that is specified in Appendix \ref{sec:OmittedSamplingImages}
with $w=h=512$. This distribution results in images that are
either mostly blue (with probability 1/2) or  mostly red (with probability 1/2),
with some smooth variations. Samples generated according to this distribution are shown in Figure 
\ref{fig:Samples}.

\begin{figure}
\begin{center}
\includegraphics[width=0.5\linewidth]{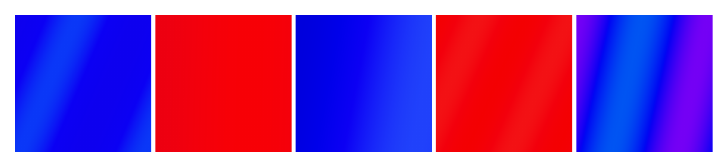}
		\caption{Sample images from the synthetic distribution used in 
		for experiments in Section \ref{sec:ToyNum}. See Appendix \ref{sec:ToyDistribution}
		for a full definition.}
		\label{fig:Samples}
\end{center}
\end{figure}

We try to learn a generative diffusion model for these images 
using two slightly different methods: $(1)$~An isotropic diffusion 
as defined in Section~\ref{sec:Isotropic}; 
$(2)$~A linear observation process
as defined in Section~\ref{sec:Linear}. In both cases, we use a simple 2-layer
convolutional network as denoiser. 

\begin{figure}
\begin{center}
\includegraphics[width=0.75\linewidth]{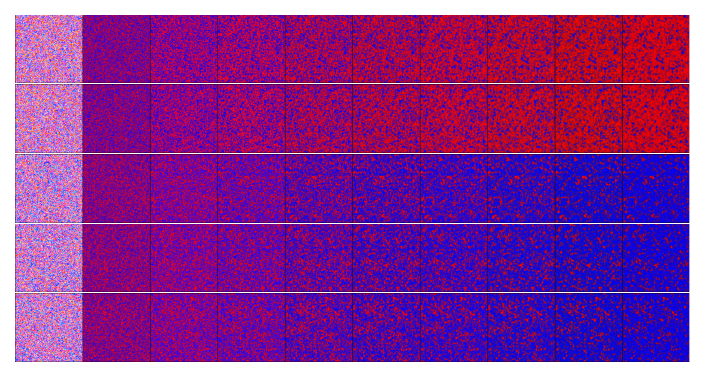}
\includegraphics[width=0.75\linewidth]{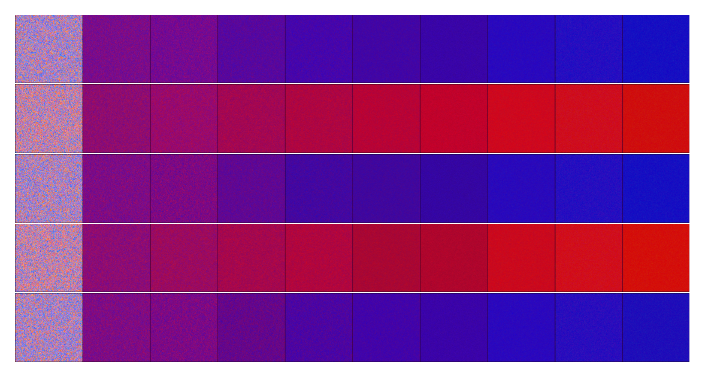}
\end{center}
		\caption{Learning the distribution of Figure \ref{fig:Samples} 
		and sampling via stochastic localization.
		Upper block: Standard isotropic diffusion. Lower block: A linear observation process.
		Each row corresponds to an independent realization of the generating process,
		with time progressing from left to right.}
		\label{fig:Generating}
\end{figure}
Before providing further details, we point to Figure \ref{fig:Generating},
which presents sample trajectories from the two generating 
processes\footnote{In these figures, we randomly flipped Red$\leftrightarrow$Blue in the
whole image with probability $1/2$, to avoid the problems discovered in the previous
section and focus on the effect of convolutional architecture.}. 
Despite the common denoiser architecture, the two generating processes behave very differently. 
Indeed, the isotropic process is locally correlated but misses global correlations. The linear
observation process instead captures these correlations while keeping the same short-ranged 
denoiser 
architecture.

\subsection{Isotropic diffusion}

In our first approach, we train a simple two-layer convolutional neural network to denoise these images,
i.e. we attempt to minimize the empirical risk 
\begin{align}
\hR_n(\btheta) &= \frac{1}{n}\sum_{i=1}^n\E\big\{\big\|\bx_i-
\hbm_{\btheta}(t\bx_i+\sqrt{t}\bG;t)\big\|^2\big\}\, .\label{eq:RiskDenoise}
\end{align}
Here expectation is taken with respect to 
$t = \tan(\alpha)^{-2}$, $\alpha\sim\Unif([0,\pi/2])$, and
 $\bG\sim\normal(\bfzero,\id_{3\times w\times h})$.

\begin{algorithm}
\caption{Forward function; 2-Layer CNN denoiser (standard)}\label{alg:FirstCNN}
\DontPrintSemicolon
\SetKwFunction{Forward}{Forward}
\SetKwProg{Fn}{Function}{:}{}
\Fn{\Forward{$\bx\in\reals^{3\times w\times h}, \alpha\in [0,\pi/2]$}}{
$\bphi \gets (\cos(\alpha\cdot i), \sin(\alpha\cdot i); i\le 10 )$\;
\For{$i \gets 0$ \bf{to} $4$}{
$\bs_i \gets {\sf Lin}_i(\bphi)$
}
 $\bx_1 \gets \bs_0+ \bs_1 \odot {\sf Relu}({\sf Conv}_1(\bx))\in \reals^{k\times w\times h}$\;
$\bx_{\out} \gets \cos(\alpha)  \cdot \bs_2 \odot \bx+\bs_3 \odot \bx + \bs_4 \odot {\sf Conv}_2(\bx_1)\in \reals^{3\times w\times h}$\;
\Return{$\bx_{\out}$}\;
}
\end{algorithm}

We use a simple convolutional network with one hidden layer.
Pseudocode for this network is given as Algorithm \ref{alg:FirstCNN}.
A few details: 
\begin{itemize}
\item We encode $t$ in terms of the angle variable $\alpha = \arctan(1/\sqrt{t})$.
\item $\Lin_i$ is a fully connected linear map.
\item $\Conv_i$ is a two-dimensional convolution with window size $5$.
  $\Conv_1$ has $k=12$ channels and $\Conv_2$ has $3$ channels.
\item $\odot$ denotes entrywise product with the PyTorch broadcasting convention. 
\end{itemize}

Since the convolutional layers have window size $5\times 5$, 
each output pixel $i$ in $\hbm_{\btheta}(\by;t)$ is a function of a $9\times 9$
patch around $i$ in the input $\by$. This appears to
result in the short range correlations in the images of Fig.~\ref{fig:Generating}.

\subsection{Linear observation process}
\label{sec:LobsExperiments}

\begin{algorithm}
\caption{Forward function; 2-Layer CNN denoiser (linear obs.)}\label{alg:SecondCNN}
\DontPrintSemicolon
\SetKwFunction{Forward}{Forward}
\SetKwProg{Fn}{Function}{:}{}
\Fn{\Forward{$\bz\in\reals^{6\times w\times h}, \alpha\in [0,\pi/2]$}}{
$\bphi \gets (\cos(\alpha\cdot i), \sin(\alpha\cdot i); i\le 10 )$\;
\For{$i \gets 0$ \bf{to} $4$}{
$\bs_i \gets {\sf Lin}_i(\bphi)$
}
 $\bx_1 \gets \bs_0+ \bs_1 \odot {\sf Relu}({\sf Conv}_1(x))\in \reals^{k\times w\times h}$\;
 $\bx_2 \gets \cos(\alpha)  \cdot \bs_2 \odot \bx+\bs_3 \odot \bx + \bs_4 \odot {\sf Conv}_2(\bx_1)\in \reals^{k\times w\times h}$\;
$\bz_{\out}\gets \bL \bx_2$\;
\Return{$\bz_{\out}$}\;
}
\end{algorithm}

In our second approach, we use the linear observation process
of Section~\ref{sec:Linear}, with the simple linear operator
$\bL:\reals^{3\times w\times h}\to \reals^{3\times w\times h}\times \reals^3$
defined as follows. Writing $\bx = (x_{ijl}:\, i\le 3,j\le w,l\le h)$ for the entries 
of $\bx$
\begin{align}
\bL\bx = \left(\begin{matrix}
\bx \\
b\bx^{\sav}
\end{matrix}\right)\, , \;\;\;\;\;\;\;\;\;\;  x_i^{\sav}:=\frac{1}{wh}\sum_{j\le w,l\le h}x_{ijl}\, 
,\;\; i\in\{1,2,3\}\, .
\end{align}
In  words, the operator $\bL$ appends to $\bx$ the averages of the 
values in each of the three channels, scaled by a factor $b$. In our simulations we use
$b=2$. Roughly speaking, the factor $b$ implies that information about the average 
$\bx^{\sav}$ is revealed a factor $b$ faster than about other coordinates in $\bx$.

The corresponding generative process \eqref{eq:ADiffusion} can be implemented with 
minimal modifications with respect to the previous section. 
Namely, we encode $\bL\bx$ as a tensor $\bz\in \reals^{6\times w\times h}$
with $6$ channels, whereby channels $4,5,6$ are constant in space and contain the averages 
$bx_1^{\sav}$,$bx_2^{\sav}$, $bx_3^{\sav}$.
In the generative process, we add the same noise to all entries in each of these channels.

The denoiser in this case is detailed in Algorithm \ref{alg:SecondCNN}
and presents minimal modifications with respect to the one in the previous section:
\begin{itemize}
\item The input has $6$ channels instead of $3$.
\item Correspondingly, the middle layer has $15$ channels instead of $12$.
\item At the output we enforce that channels $4,5,6$ contain the mean of previous ones by 
applying the operator $\bL$.
\end{itemize}

As shown in Figure \ref{fig:Generating}, these minimal modification produce a significantly 
different behavior. The images generated show stronger long range correlations.
%
%********************************
%
\section{The role of architecture:  Shift-invariant Gaussians}
\label{sec:ToyAnalytical}

In order to gain more understanding of the example in the previous section, 
we consider the toy problem 
of samplig from a centered Gaussian $\mu=\normal(\bzero,\bSigma)$.
We will focus on the case in which $\bSigma$ is
a symmetric circulant matrix. Namely there exists $c:\integers\to\reals$, such that
$\Sigma_{i,j} = c(i-j)$ and $c(k+n)=c(k)$, $c(k)=c(-k)$ for all $k$. 
Equivalently, $\mu$ is a centered Gaussian process invariant under shifts.

As a running example, we will consider the case $\bSigma=\id+\alpha\bfone\bfone^{\sT}$.
In other words, for $g_0\sim\normal(0,1)$ independent of $\bg\sim\normal(\bfzero,\id_n)$,
we have 
\begin{align}
\bx = \sqrt{\alpha} g_0\bfone+ \bg\, .\label{eq:GlobalCorr}
\end{align}
The condition number is $\kappa(\bSigma)= (1+n\alpha)$. 

It is worth reminding that a standard sampling procedure would be to
generate $\bg\sim\normal(\bzero,\id)$, and then let $\bx=\bSigma^{-1/2}\bg$. 
We will make no attempt to beat this simple method, and will instead study the
behavior of other approaches on this example.

\subsection{Sampling via isotropic Gaussian diffusions}

The posterior expectation
defined in Eq. \eqref{eq:Denoiser} is a linear function of $\by$. A simple calculation yields
\begin{align}
\bm(\by;t) =\bA_t \by\, ,\;\;\;\;\bA_t := (\id+t\bSigma)^{-1}\bSigma\, ,
\end{align}
and therefore $\bY_t$ satisfies the simple SDE
\begin{align}
\de \bY_t =\bA_t \bY_t\de t+\de\bG_t\, .\label{eq:LinearY}
\end{align}
Of course,  $\bY_t$ is normal with mean zero and covariance $t^2\bSigma+t\id$.
Letting $\bX_t:=\bm(\bY_t;t)$, we have $\bX_t\sim\normal(\bzero, \bSigma_t)$
where
\begin{align}
\bSigma_t = (1+t\bSigma)^{-1}t\bSigma^2\, .
\end{align}
whence an accurate approximation\footnote{For instance,
we can compute $W_2(\mu,\mu_t)^2=\sum_{i=1}^n\lambda_i\,f(\lambda_i t)$,
where $f(x) = (1-x/\sqrt{1+x^2})^2$.} of $\mu$ is achieved for $t\gtrsim 1/\lambda_{\min}(\bSigma)$.
Note that accurate discretization requires stepsize $\delta \lambda_{\max}(\bSigma)\lesssim 1$,
and therefore the total number of iterations will scale as the condition number $\kappa(\bSigma):=
\lambda_{\max}(\bSigma)/\lambda_{\min}(\bSigma)$.

In general, the denoiser $\bm(\by;t)$ will be replaced by an approximation.
How does architecture of the denoise  impact the generated distribution?
Since the distribution $\mu$ is Gaussian and shift-invariant, it is natural
to use a convolutional linear denoiser. However, we will constrain the convolution 
 window size to be
$2r+1\ll n$. Namely we use a matrix in $\cL(r,n)$, where
\begin{align}
\cL(r,n) :=\Big\{\bM\in\reals^{n\times n}: \; 
M_{i,j} = \ell_M(i-j)\, , \;\ell_M(k) =\ell_M(-k) =  \ell_M(n+k) \forall k 
\, , \; \ell_M(k)=0\forall |k|>r\Big\}\, .
\end{align}
We learn such a denoiser by minimizing the mean square error:
\begin{align}
\bA_t^{(r)} :=\argmin
\Big\{
\E\Big[\big\|\bx-\bA(t\bx+\sqrt{t}\bg)\big\|^2_2\Big]\,:\;\;\;
\bA\in\cL(r,n)
\Big\}\, .
\end{align}
A simple calculation reveals 
$(\bA_t^{(r)})_{i,j} =\ell_t(|i-j|)$
where $(\ell_t(u))_{-r\le u\le r}$ solves 
\begin{align}
\ell_t(u) +t \sum_{v=-r}^rc(u-v)\ell_t(v) = c(u)\,,\label{eq:OptConvolution}
\end{align}
with $\ell_t(-u) = \ell_t(u)$. 
Given a solution of this equation we can determine the distribution of $\bY_t$
(by integrating Eq.~\eqref{eq:LinearY} whereby $\bA_t$ is replaced by $\bA^{(r)}_t$)
and hence the distribution of $\bX_t=\bL^{(r)}_t\bY_t$. 

It follows from the symmetries of the problem that
 $\bX_t\sim \normal(\bzero,\bSigma^{\sgen}_t)$ where $\bSigma^{\sgen}_t$ is
a symmetric circulant matrix.
We limit ourselves to giving the results of this calculation when the correlation
structure is given by \eqref{eq:GlobalCorr}. We get 
$\lim_{t\to\infty}\bSigma^{\sgen}_t = \bSigma^{\sgen}$ where 
$\Sigma^{\sgen}_{ij}  =c^{\sgen}(i-j)$ and
 \begin{align}
 c^{\sgen}(\ell) &= \frac{1}{n}\sum_{q\in B_n} \hat{c}^{\sgen}(q)\, e^{iq\ell}\, ,
 \;\;\;\;\;
 B_n:=\Big\{q=\frac{2\pi k}{n}: -(n/2)+1\le k\le (n/2)\Big\}\, ,\\
 \hat{c}^{\sgen}(q) &= F(\nu(q),c_0)\, ,\\ c_0&:=\frac{1}{1+(2r+1)\alpha}\, ,\;\;\;\; \nu(q) := 
 \frac{\sin (q(r+1/2))}{(2r+1)\sin(q/2)}\, ,
 \end{align}
 where $F:\reals\times\reals\to\reals$ is defined in Appendix \ref{sec:OmittedSampling}.
The only fact that we will use is that $x\mapsto F(x;c_0)$ is differentiable at $x=1$,
with $F(1;c_0)=1/c_0$, $F'(1;c_0)>0$.
 
 We claim that the the generated distribution 
 $\mu^{\sgen} = \normal(\bfzero,\bSigma^{\sgen})$ is very far from the target one
  $\mu = \normal(\bfzero,\bSigma)$.  The fundamental reason for this is that --as 
  in the numerical example of the last section-- the measure $\mu$ has long range correlations
  (indeed $\E_{\mu}(x_ix_j)=\alpha>0$ for any $i\neq j$) while the finite width 
  convolutional denoiser cannot produce such long-range correlations. 
  
  These remarks are formalized by the statement below.
  \begin{proposition}
For any fixed $r\in\naturals$, $\alpha>0$, let  $\mu_n = \normal(\bfzero,\bSigma_n)$
be the Gaussian measure, with covariance $\bSigma_n=\id_n+\alpha\bfone_n\bfone_n^{\sT}$,
and denote by $\mu^{\sgen}_{n,r}$ be the generative distribution produced by the diffusion
sampler with convolutional denoiser of window size $2r+1$.
 
 Then we have, for all $(2r+1)\le n/8$ and $n\alpha\ge 4$
 \begin{align}
 W_2(\mu_n,\mu^{\sgen}_{n,r}) & \ge \frac{1}{2}\sqrt{n\alpha}\, ,
 \label{eq:LemmaFirstClaim}\\
 \lim_{n\to\infty}\big\|\mu_n-\mu^{\sgen}_{n,r}\big\|_{\sTV}& = 1\, .
 \label{eq:LemmaSecondClaim}
 \end{align}
  \end{proposition}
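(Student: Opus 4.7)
The plan is to find a one-dimensional linear projection whose law under $\mu_n$ and under $\mu^{\sgen}_{n,r}$ is Gaussian with very different variances, and then deduce both claims from the non-expansiveness of $W_2$ and of $\|\cdot\|_{\sTV}$ under Lipschitz push-forwards. The natural candidate is $T(\bx):=\bfone_n^{\sT}\bx/\sqrt{n}$, since $\bfone_n/\sqrt{n}$ is the top eigenvector of $\bSigma_n=\id_n+\alpha\bfone_n\bfone_n^{\sT}$ with eigenvalue $1+n\alpha$, whereas a width-$(2r+1)$ convolutional denoiser is incapable of producing global correlations and so $\bSigma^{\sgen}$ should have much smaller mass along $\bfone_n$.

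The key computational step I would carry out first is to show that $\bfone_n^{\sT}\bSigma^{\sgen}\bfone_n = n(1+(2r+1)\alpha)$. For any symmetric circulant matrix with symbol $c$, one has $\bfone_n^{\sT}\bSigma^{\sgen}\bfone_n = n\sum_{\ell}c^{\sgen}(\ell) = n\,\hat{c}^{\sgen}(0)$, the last identity being the inverse DFT evaluated at $\ell=0$. The proposition's formula gives $\hat{c}^{\sgen}(0) = F(\nu(0),c_0)$; since $\nu(q) = \sin(q(r+1/2))/((2r+1)\sin(q/2))\to 1$ as $q\to 0$ (by Taylor expansion of numerator and denominator), this equals $F(1,c_0)=1/c_0=1+(2r+1)\alpha$. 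Consequently, under $\mu^{\sgen}_{n,r}$ the projected variable $T(\bx)$ is $\normal(0,\sigma_2^2)$ with $\sigma_2^2=1+(2r+1)\alpha$, while under $\mu_n$ it is $\normal(0,\sigma_1^2)$ with $\sigma_1^2=1+n\alpha$.

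To obtain \eqref{eq:LemmaFirstClaim} I push forward through the $1$-Lipschitz map $T$, so $W_2(\mu_n,\mu^{\sgen}_{n,r})\geq|\sigma_1-\sigma_2| = (\sigma_1^2-\sigma_2^2)/(\sigma_1+\sigma_2)$. The numerator equals $(n-2r-1)\alpha\geq 7n\alpha/8$ by hypothesis. For the denominator I use $n\alpha\geq 4$ to bound $\sqrt{1+n\alpha}\leq\tfrac{\sqrt{5}}{2}\sqrt{n\alpha}$ and, together with $(2r+1)\leq n/8$, $\sqrt{1+(2r+1)\alpha}\leq\sqrt{3n\alpha/8}=\tfrac{\sqrt{6}}{4}\sqrt{n\alpha}$, so the sum is at most $(\tfrac{\sqrt{5}}{2}+\tfrac{\sqrt{6}}{4})\sqrt{n\alpha}<\tfrac{7}{4}\sqrt{n\alpha}$, yielding the stated $\tfrac{1}{2}\sqrt{n\alpha}$ with a small slack. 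For \eqref{eq:LemmaSecondClaim} I apply the data-processing inequality under the same projection, $\|\mu_n-\mu^{\sgen}_{n,r}\|_{\sTV}\geq\|\normal(0,\sigma_1^2)-\normal(0,\sigma_2^2)\|_{\sTV}$; as $n\to\infty$ with $r,\alpha$ fixed, the variance ratio $\sigma_1^2/\sigma_2^2\to\infty$, and two centered Gaussians with diverging variance ratio are asymptotically mutually singular (e.g.\ the narrower one concentrates in an interval of length $O(\sigma_2)$ on which the wider assigns mass $O(\sigma_2/\sigma_1)\to 0$).

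The one non-routine piece is the Fourier identification $\hat{c}^{\sgen}(0)=1+(2r+1)\alpha$, which requires only reading off $\nu(0)=1$ together with the stated value $F(1,c_0)=1/c_0$, but is the single place where the structure of the generative covariance enters in an essential way. After this identification, both bounds are a one-line comparison of univariate centered Gaussians, and the numerical assumptions $(2r+1)\leq n/8$ and $n\alpha\geq 4$ serve only to keep the denominator $\sigma_1+\sigma_2$ small enough relative to the gap $\sigma_1^2-\sigma_2^2$.
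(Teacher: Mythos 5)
Your proposal is correct and follows essentially the same route as the paper: both arguments project onto the all-ones direction, use the identity $\bfone^{\sT}\bSigma^{\sgen}_{n,r}\bfone = n\,\hat{c}^{\sgen}(0) = n\,F(1,c_0) = n(1+(2r+1)\alpha)$, arrive at the intermediate bound $W_2 \ge \sqrt{1+n\alpha}-\sqrt{1+(2r+1)\alpha}$, and establish the total-variation claim by comparing the two one-dimensional Gaussian marginals (the paper phrases the $W_2$ step as Cauchy--Schwarz plus the triangle inequality on an arbitrary coupling rather than as a Lipschitz push-forward, but the computation is identical). Your explicit numerical verification of the $\tfrac{1}{2}\sqrt{n\alpha}$ bound under the hypotheses $(2r+1)\le n/8$ and $n\alpha\ge 4$ fills in the ``simple calculation'' the paper leaves to the reader.
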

  \begin{proof}
For any coupling $\gamma$ of $\mu_n$,$\mu^{\sgen}_{n,r}$, letting 
$(\bx,\bx^{\sgen})\sim\gamma$, we have
\begin{align*}
\E\big\{\big\|\bx-\bx^{\sgen}\big\|_2^2\big\}^{1/2}&\ge \frac{1}{\sqrt{n}}
\E\big\{\<\bx-\bx^{\sgen},\bfone\>^2\big\}^{1/2}\\
&\ge \frac{1}{\sqrt{n}}
\E\big\{\<\bx,\bfone\>^2\big\}^{1/2} -\frac{1}{\sqrt{n}}
\E\big\{\<\bx^{\sgen},\bfone\>^2\big\}^{1/2}\\
& = \sqrt{\frac{1}{n}\<\bfone,\bSigma_{n}\bfone\>}- \sqrt{\frac{1}{n}\<\bfone,\bSigma^{\sgen}_{n,r}\bfone\>}\, .
\end{align*}
Here the first inequality follows from Cauchy-Schwarz and the second is
 triangular inequality.
 On the other hand, using the above formulas
 \begin{align}
 \<\bfone,\bSigma_{n}\bfone\>  &= \alpha n^2+n\, ,\\
 \<\bfone,\bSigma^{\sgen}_{n,r}\bfone\> & = \hat{c}^{\sgen}(0) n = F(1,c_0)n =
 (1+(2r+1)\alpha)n\, .
 \end{align}
 Substituting in the above, and using the definition of Wasserstein distance,
 we have
 \begin{align}
 W_2(\mu_n,\mu^{\sgen}_{n,r}) & \ge \sqrt{n\alpha+1}-\sqrt{(2r+1)\alpha+1}\, ,
 \end{align}
 and the claim \eqref{eq:LemmaFirstClaim} follows by a simple calculation.

To prove Eq.~\eqref{eq:LemmaSecondClaim}, define the random variables
$Z :=  \<\bx,\bfone\>/n$, $Z^{\sgen} :=  \<\bx^{\sgen},\bfone\>/n$.
By the above calculation we have $Z\sim \normal(0,\alpha+n^{-1})$,
$Z^{\sgen}\sim \normal(0,(1+(2r+1)\alpha)n)$ and therefore 
\begin{align}
\big\|\mu_n-\mu^{\sgen}_{n,r}\big\|_{\sTV} \ge 
\big\|P_Z-P_{Z^{\sgen}} \big\|_{\sTV}\to 1\, .
\end{align}
  \end{proof}

The notion that the measure $\mu^{\sgen}$ has only short range correlations
can be easily made more precise. A simple calculation shows that correlations only extend to
distances of order $r$, see Appendix \ref{sec:OmittedSampling}.

\subsection{Sampling via the linear observation process}

We consider the same linear observation process as in our numerical experiments of 
Section \ref{sec:LobsExperiments} (with obvious adaptations).
Namely, the observation process is defined by 
\begin{align}
\bY_{t} &= \left(\begin{matrix}Y_{0,t}\\
\bY_{\star,t}\end{matrix}\right)\, , \;\;\;\; 
Y_{0,t}= \frac{bt}{n}\<\bx,\bfone\>+ B_{0,t}\, ,\;\;
\bY_{\star,t}= t\, \bx+\bB_{\star,t}\, ,
\end{align}
where $\{(B_{0,t},\bB_{\star,t})\}_{t\ge 0}$ is an $(n+1)$-dimensional Brownian motion. 
This corresponds to the general construction of Section~\ref{sec:Linear}, whereby the matrix
$\bL$ is given by
\begin{align}
\bL = \left[\begin{matrix}
b/n & b/n & b/n & \cdots & b/n\\
1 & 0 & 0 & \cdots & 0\\
0 & 1 & 0 & \cdots & 0\\
\cdot & \cdot & \cdot & \cdots & \cdot\\
\cdot & \cdot & \cdot & \cdots & \cdot\\
0 & 0 & 0 & \cdots & 1\\
\end{matrix}
\right]\, .
\end{align}

We follow the general approach of Section~\ref{sec:Linear} for sampling, namely
\begin{align}
\de \bY_t =\bm_{\bL}(\bY_t;t)\de t+\de\bG_t\, ,
\end{align}
where now $\bm_{\bL}(\,\cdot\,;t):\reals^{n+1}\to\reals^n$, and $(\bG_t)_{g\ge 0}$
is an  $(n+1)$-dimensional Brownian motion, and $\bm_{\bL}(\by;t)=\E[\bL\bx|\bY_t=\by]$. 
In fact we know that the correct $\bm$ is linear
(since the distribution of $\bx$ is Gaussian) and shift invariant (since the distribution
of $\bx$ shift-invariant). It is understood that $\ell$-th shift acts on $\reals^{n+1}
=\reals\times\reals^n$ via
\begin{align}
\bS^{\ell}\left(\begin{matrix}
z_0\\
z_1\\
z_2\\
\vdots\\
z_n
\end{matrix}\right) = 
\left(\begin{matrix}
z_0\\
z_{1+\ell}\\
z_{2+\ell}\\
\vdots\\
z_{\ell}
\end{matrix}\right)\, .
\end{align}
In other words, we can always consider (writing $\by=(y_0,\by_{\star})$)
\begin{align}
\bm_{\bL}(y_0,\by_{\star};t) & =: 
\left(\begin{matrix}
m_{0}(y_0,\by_{\star};t)\\
\bm_{\star}(y_0,\by_{\star};t)
\end{matrix}\right) = 
\left(\begin{matrix}
d_t & c_t \bfone^{\sT}\\
a_t\bfone & \bA_t 
\end{matrix}\right)\cdot
\left(\begin{matrix}y_0\\
\by_{\star}
\end{matrix}\right)\, ,
\end{align}
where $\bA_t$ is a circulant matrix and $a_t,c_t,d_t$ are scalars. 

With the objective of understanding the experiments of Section \ref{sec:LobsExperiments},
we attempt to approximate the optimal $\bA_t$ using finite-window covolutions:
\begin{align}
\big(a_t^{(r)},c_t^{(r)},d_t^{(r)},\bA_t^{(r)}\big) :=\argmin
\left \{
\E\left\{\left\|\bL\bx- \left(\begin{matrix}
d & c \bfone^{\sT}\\
a\bfone & \bA
\end{matrix}\right)\cdot
\left(\begin{matrix}Y_{0,t}\\
\bY_{\star,t}
\end{matrix}\right)\right\|^2_2\right\}\,:\;\;\;
\bA\in\cL(r,n)
\right\}\, .
\end{align}
where $Y_{0,t}, \bY_{\star,t}$ are distributed as specified above.

To simplify calculations, rather than explicitly solving the above quadratic optimization
problem, we will guess a good feasible solution and check whether it gives the desired 
probability approximation.
Our guess will be
\begin{align}
\hm_{0}(y_0,\by_{\star};t) & = \frac{\alpha b^2 y_0}{1+\alpha b^2 t}\, ,\label{eq:Guess1}\\
\hbm_{\star} (y_0,\by_{\star};t) & =  \frac{1}{1+t}
\Big(\by_{\star}- \frac{\alpha b y_0}{1+\alpha b^2 t}
\bfone\Big)+\frac{\alpha b y_0}{1+\alpha b^2 t}
\bfone\, .\label{eq:Guess2}
\end{align}
The rationale for this choice is as follows. Recall that $\bx = \sqrt{\alpha} g_0\bfone+\bg$
and therefore letting $\bz=\bL\bx$, we have $z_0= \sqrt{\alpha} b g_0+(G_1/\sqrt{n})$  with 
$(g_0,G_1)$ independent standard normals. Therefore, the proposed 
$\hm_{0}(y_0,\by_{\star};t)$ is the Bayes optimal estimator for $z_0$ given $Y_{0,t}$
up to terms $O(1/n)$. As for the $\hbm_{\star} (y_0,\by_{\star};t)$, notice that (for $i\ge 1$)
$z_i-(z_0/b)\sim\normal(0,1)$. Hence, if $z_0$ was known, the optimal estimator for 
$z_i$ would be $(z_0/b)+(y_i-z_0/b)/(1+t)$. The estimator \eqref{eq:Guess2}
replaces $z_0$ by its estimate given by Eq.~\eqref{eq:Guess1}.
\begin{proposition}\label{prop:LobsIsGood}
 Let  $\mu_n = \normal(\bfzero,\bSigma_n)$
be the Gaussian measure, with covariance $\bSigma_n=\id_n+\alpha\bfone_n\bfone_n^{\sT}$,
and denote by $\hmu^{\sgen,t}_{n}$ the distribution generated by the 
(continuous time) linear observation process (with 
estimators of Eq.~\eqref{eq:Guess1}, \eqref{eq:Guess2}) at time $t$. 

Then:
\begin{align}
\lim_{t\to\infty}W_{2}(\mu_{n},\hmu^{\sgen,t}_{n}) = \lim_{t\to\infty}\|\mu_{n}-\hmu^{\sgen,t}_{n}
\|_{\sTV}=0\, .
\end{align}
\end{proposition}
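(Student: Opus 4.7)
The plan is to exploit the Gaussian and shift-invariant structure of the setting. Because the drift specified by \eqref{eq:Guess1}--\eqref{eq:Guess2} is linear in $\bY_t=(Y_{0,t},\bY_{\star,t})$ and the initial condition is deterministic, the solution $\bY_t$ is jointly Gaussian, and the generated output is a centered Gaussian vector; hence $\hmu_n^{\sgen,t}=\normal(\bzero,\bSigma_n^{\sgen,t})$ for some covariance $\bSigma_n^{\sgen,t}$. Moreover, the drift is equivariant under the symmetric group permuting coordinates of $\bx$, so $\bSigma_n^{\sgen,t}$ has $\bfone$ as an eigenvector and decomposes as $\sigma_\parallel(t)\,\bfone\bfone^{\sT} + \sigma_\perp(t)\,P_\perp$ with $P_\perp:=\id-\bfone\bfone^{\sT}/n$. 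The goal is thus to show $\sigma_\perp(t)\to 1$ and $\sigma_\parallel(t)\to 1/n+\alpha$ as $t\to\infty$, matching the analogous decomposition $\bSigma_n=P_\perp+(1/n+\alpha)\bfone\bfone^{\sT}$.

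The perpendicular direction is immediate: projecting the SDE for $\bY_{\star,t}$ onto $P_\perp$, the correction term proportional to $\bfone$ in \eqref{eq:Guess2} vanishes, so $\bY_{\star,t}^\perp:=P_\perp\bY_{\star,t}$ obeys the autonomous linear SDE $\de\bY_{\star,t}^\perp=\bY_{\star,t}^\perp/(1+t)\,\de t+\de P_\perp\bB_{\star,t}$, decoupled from $Y_{0,t}$. The integrating factor $(1+t)^{-1}$ yields $\bY_{\star,t}^\perp\sim\normal(\bzero,\,t(1+t)P_\perp)$, and the corresponding component of the generated output, $\bY_{\star,t}^\perp/(1+t)$, has covariance $\frac{t}{1+t}P_\perp\to P_\perp$.

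The parallel direction is the substantive step. The scalar $Y_{0,t}$ obeys an autonomous linear SDE (since $\hm_0$ depends only on $y_0$), which is solved by the integrating factor $(1+\alpha b^2 t)^{-1}$ to give $Y_{0,t}=(1+\alpha b^2 t)Z_t$ with $Z_t:=\int_0^t(1+\alpha b^2 u)^{-1}\de B_{0,u}$ and $\Var(Z_t)=t/(1+\alpha b^2 t)$. Averaging the SDE for $\bY_{\star,t}$ against $\bfone/n$ then gives a linear SDE for $\bar Y_t:=\bfone^{\sT}\bY_{\star,t}/n$, driven by $Y_{0,t}$ and by the independent scalar Brownian $\bar B_t=\bfone^{\sT}\bB_{\star,t}/n$ of variance $t/n$; it is likewise solved by integrating factor $(1+t)^{-1}$. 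Substituting these representations into the parallel component of the generated output produces a closed stochastic-integral formula for the $\bfone$-coefficient as a linear functional of $(B_{0,\cdot},\bar B_\cdot)$, and one evaluates its $L^2$ norm as $t\to\infty$; the target value $1/n+\alpha$ should come from the $\bar B$ contribution through $\int_0^\infty(1+s)^{-2}\de s=1$ (giving $1/n$) together with the terminal covariance of $Y_{0,t}$ (giving $\alpha$).

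Once $\bSigma_n^{\sgen,t}\to\bSigma_n$ in operator norm, both conclusions follow from standard Gaussian comparison: the 2-Wasserstein distance between centered Gaussians is given by the Bures formula and is continuous in the covariance pair, while the total variation is controlled by Pinsker's inequality together with the explicit Gaussian KL divergence, which is a smooth function of the two covariances and vanishes at coincidence provided the reference is non-singular (here $\lambda_{\min}(\bSigma_n)=1>0$). The hard part is the parallel-direction variance computation: the stochastic integrals driven by $\de B_{0,\cdot}$ involve factors of the form $\log((1+t)/(1+u))$ arising from $\int_u^t s/(1+s)^2\,\de s$, and matching the $\bar Y_t/(1+t)$ and $\alpha b t Z_t/(1+t)$ contributions term-by-term via an integration-by-parts/Itô identity (so that the boundary terms at $s=t$ cancel the logarithmic growth) is the delicate step that pins down the exact limit $1/n+\alpha$.
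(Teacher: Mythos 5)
Your overall strategy is close in spirit to the paper's: both arguments integrate the linear SDEs explicitly and exploit that the generated law is a centered Gaussian whose covariance has the form $c_1(t)\id+c_2(t)\bfone\bfone^{\sT}$, so that everything reduces to two scalar limits. Your closing step (convergence of the covariance plus continuity of $W_2$ and of the Gaussian KL/total-variation in the covariance pair, using $\lambda_{\min}(\bSigma_n)=1$) is a legitimate substitute for the paper's closing lemma; the paper instead establishes \emph{almost sure} convergence of the generated vector $\bX_t$ to $\sqrt{\alpha}\,G_0\bfone+\bG_*$, whose law is exactly $\mu_n$, and then invokes the fact that a.s. convergence of Gaussian vectors to a non-degenerate Gaussian implies convergence in $W_2$ and total variation. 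Your perpendicular-direction computation ($\tfrac{t}{1+t}P_\perp\to P_\perp$) is correct and matches the paper's term $\int_0^t(1+s)^{-1}\,\de B_{i,s}\to G_i$.

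The genuine gap is that the parallel-direction limit --- the entire content of the proposition, since it is precisely the $\bfone$-direction variance $1+n\alpha$ that the plain convolutional denoiser fails to reproduce --- is asserted but not proved. You correctly observe that, after stochastic Fubini, the term $\int_0^t(1+s)^{-2}\,\tfrac{\alpha b s}{1+\alpha b^2 s}\,Y_{0,s}\,\de s$ produces kernels containing $\log\tfrac{1+t}{1+u}$, and that the proof lives or dies on whether this logarithmic growth is cancelled by the boundary term $\tfrac{\alpha b t}{(1+\alpha b^2 t)(1+t)}Y_{0,t}$ appearing in \eqref{eq:Xtdef}; but you stop exactly there, labelling it ``the delicate step.'' Until that cancellation is exhibited --- and it is genuinely delicate, since the boundary term is bounded in $L^2$ while the Fubini kernel grows like $\log t$, so the cancellation must be checked against the precise coefficients of \eqref{eq:Guess1}--\eqref{eq:Guess2} rather than assumed --- you have only proved convergence of the perpendicular block, and the claim $\sigma_\parallel(t)\to 1/n+\alpha$ is unsupported. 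Note that this is exactly the point where the paper's own proof does its work, by passing to the a.s. limit of $\tfrac{\alpha b^2 Y_{0,t}}{1+\alpha b^2 t}$ and of the resulting integrals; completing your argument requires either importing that pathwise analysis or carrying your $L^2$ computation through to the end, and in either case the logarithmic bookkeeping cannot be waved away.
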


\section{Approximating the transition probabilities}
\label{sec:Oracle}

Sampling schemes based on diffusions or stochastic localization 
reduce the problem of sampling from the target distribution $\mu$ 
to the one of approximating the transition probabilities
\begin{align}
\rP_{t,t+\delta}(\by, A)& =\prob\big(\bY_{t+\delta}\in A\big|\bY_t=\by\big)\, .
\end{align}
In this section we discuss this problems having in mind the two scenarios described
in Section \ref{sec:FirstSampling}: 
$(1)$~$\mu$ is known analytically (but approximating
the transition probability is nevertheless nontrivial);
$(2)$~$\mu$ is only known via samples $\bx^{(1)},\dots, \bx^{(N)}\sim_{i.i.d.}\mu$,
and the transition probabilities need to be learnt from data.

If $\hrP_{t,t+\delta}$ are approximate transition probabilities, then it is 
useful to measure their quality via the Kullback-Leibler 
(KL) divergence between the two processes:
\begin{align}
D(\rP\|\hrP):= -\E_{\rP} \log \frac{\de\hrP}{\de\rP}(\bY_0^T)\, .\label{eq:GeneralKL}
\end{align}
Here, with an abuse of notation, we denote by $\rP$, $\hrP$ the probability 
measures over the paths $\bY_0^T$ induced by the above transition probabilities.
In order to simplify formulas, we will assume $Y_0$ to have the same distribution under 
measures $\rP$ and $\hrP$. 
It is often reasonable to assume that $P$ and $\hP$ have densities with respect to
the a common measure $\rP_0$ (not necessarily normalized), 
and parametrize the density of the latter as 
$f(\by_0^T;\btheta)$. We thus obtain
\begin{align}
D(\rP\|\hrP):= -\E_{\rP} \log f(\by_0^T;\btheta)-D(P\|\rP_0)\, .\label{eq:KL-Parametrized}
\end{align}

\subsection{Approximating transition probabilities for explicitly known $\mu$}

\rev{If $\mu$ is explicitly known (e.g. is of the form \eqref{eq:ExponentialForm}),
then we try to construct algorithmic approximations of the transition probabilities.
For most of the examples in Section \ref{sec:Examples}, this amounts to computing the mean or
marginal distribution of a single coordinate of $\bx$ under the conditional measure
$\mu_t(\,\cdot\,):=\prob(\bx\in\,\cdot\,|\bY_t)$. 
As specific localization scheme  thus provides  a reduction from the problem of sampling to
the problem of computing the mean or marginals  of the original distribution, conditioned
to $\bY_t$. Two possible approaches for this task are $(i)$~Variational methods;
$(ii)$~Method based on correlation decay.}

\rev{For illustration, let us consider sampling from
\eqref{eq:ExponentialForm} using the isotropic process of section \ref{sec:Isotropic},
in which case we need to estimate the mean $\bm(\by,t)$.} 

\rev{\emph{Variational methods}
starts from the observation that, by the Gibbs variational principle}
\begin{align}
\bm(\by,t) & = \argmin\Big\{-\<\by,\bm\>+\frac{t}{2}\<\bfone,\bs\>+\cuF(\bm,\bs)\Big\}\, ,\\
\cuF(\bm,\bs)&:= \min\Big\{\int H(\bx)\nu(\de\bx)-\Ent(\nu): \int \!\bx\, \nu(\de\bx)=\bm,
\int \!\bx^{\odot 2}\, \nu(\de\bx)=\bs\Big\}\, ,
\end{align}
\rev{where $\bx^{\odot 2}\in\reals^n$ denotes the entrywise square of $\bx$,
and  $\Ent(\nu)$ is the entropy of $\nu$ with respect to Lebesgue\footnote{Namely $\Ent(\nu)=\infty$ 
unless $\nu$ has a density $f$ with respect to Lebesgue,
in which case $\Ent(\nu) = \int f\log(1/f)\de\bx$.}.
Variational methods construct tractable approximations of $\cuF(\bm,\bs)$.
We refer to  \cite{el2022sampling,el2025sampling,montanari2023posterior,huang2024sampling} for applications of this approach to high-dimensional 
statistics and statistical physics.}

\rev{\emph{Correlation decay methods} apply to cases in which $H(\bx)$ decomposes according to a 
graph structure, e.g.  $H(\bx)=\sum_{(i,j)\in E} U_{ij}(x_i,x_j)$, where $E$ 
is the edge set of a finite graph $G=(V,E)$. The measure $\mu_t$ then decomposes according the
same graph}
\begin{align}
\mu_t(\de\bx) = \frac{1}{Z(\bY_t,t)} \,\exp\Big\{-\sum_{(i,j)\in E}U_{ij}(x_i,x_j)+
\sum_{i\in V}\Big(Y_{t,i}x_i-\frac{t}{2}x_i^2\Big)\Big\}\, \de\bx\, .
\end{align}
\rev{Roughly speaking, correlation decay methods attempt to compute $m_i(\bT_t,t)=\int x_i \, \mu_t(\de\bx)$
replacing $\mu_t$ by the measure $\mu^{(i)}_t$ obtained by including only variables
in a neighborhood $V(i)\subseteq V$ of $i$  \cite{weitz2006counting,dembo2010gibbs,gamarnik2012correlation}.}

\subsection{Estimating transition probabilities from samples}

\rev{Given data $(\bx^{(i)})_{i\le N}\sim_{iid}\mu$, we generate realizations 
$(\bY^{(i)}_{t})_{i\le N,t\ge 0}$
of the observation process. We can then estimate a parametric model 
$\of$ by minimizing the empirical version of Eq.~\eqref{eq:KL-Parametrized}
(negative likelihood)}
\begin{align}
\hR_n(\btheta) := -\frac{1}{N}\sum_{i=1}^N \log\of\big((\by^{(i)})_0^T;\btheta\big)\, .
\label{eq:EmpiricalRisk}
\end{align}
\rev{Consider for simplicity the case of discrete time $t\in I=\{0,\delta,2\delta,\dots ,T\}$.
Then we can set 
\begin{align}
\of\big(\by_0^T;\btheta\big) = \prod_{t\in I\setminus\{T\}}f_t\big(\by_t,\by_{t+\delta};\btheta\big)
\end{align}
where $f_t(\by_t,\by_{t+\delta};\btheta)$ models the conditional probability density 
of $\bY_{t+\delta}$ given $\bY_t=\by_t$. The empirical risk then reads}
\begin{align}
\hR_n(\btheta) := -\frac{1}{N}\sum_{i=1}^N\sum_{t\in  I\setminus \{T\}} 
\log f_t\big(\by^{(i)}_t,\by^{(i)}_{t+1};\btheta\big)\, .
\end{align}
\rev{In the special case of the isotropic Gaussian process of Section
\ref{sec:Isotropic}, it is natural to approximate the posterior mean $\bm(\by;t)$
by $\hbm(\by,t;\btheta)$ (so that $\hrP$ is the measure of a diffusion with drift $\hbm$).
The cost  \eqref{eq:EmpiricalRisk} is then essentially equivalent to the standard score  
matching objective (see Appendix \ref{sec:Loss})}
\begin{align}
\hR_n(\btheta) := -\frac{1}{N}\sum_{i=1}^N\int_0^{\infty} 
\big\|\hbm(\by_t^{(i)},t;\btheta)-\bx^{(i)}\big\|\, \de t\, .
\end{align}
\rev{In Appendix \ref{sec:Loss} we also derive explicit forms of the 
loss function for all the examples of Section \ref{sec:Examples}.}

\subsection{On the choice of the stochastic localization scheme}

\rev{We briefly mention a few aspects to consider when choosing a suitable
stochastic localization scheme, for sampling a distribution $\mu$ of interest.}

\paragraph{Preserve symmetries.} \rev{If the distribution $\mu$ is invariant under a transformation
group $\fG$, it can be useful to choose an observation process $\bY_t$
which is also invariant\footnote{If $g\in \fG$ acts on $\reals^n$, as 
$\bx\mapsto \sT_{g}\bx$, then $\mu$ is invariant if $\mu(\sT_gA)=\mu(A)$
for any set $A$, and the observation process $\bY_t$ is invariant if $(\bY_t)_{t\in I}|_{\bx}\ed
(\bY_t)_{t\in I}|_{\sT_g\bx}$.} under $\fG$. If this is the case, then
$\mu_t$ remains invariant at all $t$. An example is provided in Section
\ref{sec:Euclidean}.}

\rev{Constructing the observation process $\bY_t$ so as to
respect the problem symmetries is useful because it allows to focus on
sampling (and estimating)  the `non-trivial' degrees of freedom.}

\paragraph{Avoid hard regions.} \rev{Consider --to be definite-- the isotropic Gaussian
process of Section \ref{sec:Isotropic}. A substantial literature 
(see, e.g., \cite{brennan2018reducibility,hopkins2017power,schramm2022computational})
 provides
evidence of the fact that ---in some cases--- the denoiser 
$\bm(\by,t)$ cannot be approximated\footnote{Namely,
no algorithm $\hbm$ achieves $\E\{\|\hbm(\bY_t,t)-\bm(\bY_t,t)\|^2\} =o(1)\cdot
\E\{\|\bm(\bY_t,t)\|^2\}$.} using polynomial time algorithms, for some interval of 
values of $t$. This phenomenon is referred to as an `information-computation gap.'
In practice $\bm(\,\cdot\,,t)$ will be approximated by a neural network or
another polynomial time algorithm. Hence the intractability of $\bm(\by,t)$
leads to a failure of sampling fail, as shown in a simple case in \cite{montanari2025computational}.}

\rev{In general, we want to construct the observation process $\bY_t$
such that the transition probabilities $\sP_{t,t+\delta}(\by|\,\cdot\,)$ can be approximated
in polynomial time.}

\paragraph{Avoid phase transitions.}\rev{Of course, rigorous evidence of hardness
exists only in a small number of idealized models. For more complex probability distributions
$\mu$ (which might only be known via samples $\bx^{(1)},\dots, \bx^{(N)}$) it
is a priori unclear whether the denoiser $\bm(\by,t)$ or 
(more generally) the transition probabilities $\sP_{t,t+\delta}(\by|\,\cdot\,)$  
can be approximated by a polynomial time.}

\rev{Let us focus, to be definite, on the problem of approximating the denoiser 
$\bm(\by,t)$. It is a general observation that, when this is intractable in an interval 
$t\in (t_{1},t_2)$, then the minimum mean square error $\MMSE_n(t) =\E\{\|\bx-\bm(\bY_t,t)\|^2\}$
undergoes a phase transition at $t_1$, and similarly for the polynomially achievable mean square error
(at a different point.) Namely $[\MMSE_n((1-\eps_n)t_1)-\MMSE_n((1+\eps_n)t_1)]/\MMSE_n((1-\eps_n)t_1)$
remains bounded away from zero as $n\to\infty$ (for some $\eps_n\to 0$.)}

\rev{The co-occurrence of information-computation gaps and phase transitions of the type
just describe suggests to avoid stochastic localization schemes for which the algorithmic
mean square error appears to have sharp drops when $t$ crosses a threshold.}

\paragraph{Add latents.} \rev{If $\mu$ is strongly multimodal or, more generally, it is 
only well concentrated after conditioning on the latent, it is advisable to
take this into account in the construction of the observation process. 
Intuitively, sampling the latent from the correct marginal distribution because 
the latent values have significant impact on samples.
A naive sampling scheme can fail at this because (for instance) 
the denoiser $\bm(\by,t)$ is trained to capture the marginal distribution of individual 
variables later than the global latent. An example of such a failure
was given in Section \ref{sec:Multimodal}.}

\rev{In order to address this difficulty it is advisable to add observations (as part
of the observation process) that explicitly depend on the latents.}

\section*{Acknowledgements}

I would like to thank Marc Laugharn for a stimulating collaboration that motivated the present 
report.

\bibliographystyle{amsalpha}
\addcontentsline{toc}{section}{References}

\newcommand{\etalchar}[1]{$^{#1}$}
\providecommand{\bysame}{\leavevmode\hbox to3em{\hrulefill}\thinspace}
\providecommand{\MR}{\relax\ifhmode\unskip\space\fi MR }
% \MRhref is called by the amsart/book/proc definition of \MR.
\providecommand{\MRhref}[2]{%
  \href{http://www.ams.org/mathscinet-getitem?mr=#1}{#2}
}
\providecommand{\href}[2]{#2}

\newpage 

\appendix

%
%***********************************************************************
%
\section{Loss functions}
\label{sec:Loss}
The purpose of this appendix is to collect known explicit formulas for 
the general KL divergence \eqref{eq:GeneralKL}. In particular, we will cover 
all examples detailed in the previous sections.

\subsection{Gaussian observation process}

Consider the Gaussian process of Section \ref{sec:Anisotropic},
which we stop at time $T$. We use the estimated drift $\hbm(\by;\bOmega)$
to generate
\begin{align}
\de\hbY_t &= \bQ(t) \bm(\hbY_t;\bOmega(t))\, \de t + \bQ(t)^{1/2} \de \bB_t\, .
\end{align}
An immediate application of Girsanov's theorem yileds
\begin{align}
D(P\|\hP) = \frac{1}{2}\int_{0}^T\big\|\bQ(t)^{1/2}(\bm(\bY_t;\bOmega(t))-\hbm(\bY_t;
\bOmega(t)))\big\|_2^2
\de t\, .
\end{align}
This of course includes the standard diffusion of Section \ref{sec:Isotropic}
as a special case.
Also, the linear information process of Section \ref{sec:Linear}
also fits this framework if we reinterpret $\bm(\bY_t;t)$  as $\bm_{\bA}(\bY_t;t)$.

\subsection{Discrete time Markov chains}

Assume $t\in I:= \{0,1,\dots,T\}$. We denote the transition probabilities by 
$\rP_t(\by;A):= \prob(\bY_{t+1}\in A |\bY_t=\by)$ and 
$\hrP_t(\by;A):= \hprob(\bY_{t+1}\in A|\bY_t=\by)$.

If $\hrP_t(\by;\,\cdot\, )$ has a density with respect to $\rP_t(\by;\,\cdot\,)$,
then we have:
\begin{align}
D(P\|\hP) = -\sum_{t=0}^T\E
\log \frac{\de\hrP_t}{\de\rP_t}(\bY_t;\bY_{t+1})\, .
\end{align}

For instance this is the case for the erasure process of Section
\ref{sec:Erasure}. In this case, the transition probabilities are
estimates of the conditional laws
$\mu\big(x_{i(t)}\in \,\cdot\, \big|x_{i(1)},\dots, x_{i(t-1)}\big)$
\begin{align}
D(P\|\hP) = -\sum_{t=0}^T\E
\log \frac{\de\hmu}{\de\mu}\big(X_{i(t)}|X_{i(1)},\dots, X_{i(t-1)}\big)\, .
\end{align}
Similarly, for the information percolation process of Section
\ref{sec:Percolation}, we have
\begin{align}
D(P\|\hP) = -\sum_{t=0}^T\E
\log \frac{\de\hmu_t}{\de\hmu}(x_{t(\ell+1)}-x_{o(\ell+1)} \big|
x_{t(1)}-x_{o(1)},\dots, x_{t(\ell)}-x_{o(\ell)}\big)\, .
\end{align}

\subsection{Continuos time Markov chains}

We will consider the case of discrete state space $\bY_t\in\cY$.
Assume $t\in I =[0,T]$, and transition rated given by
\begin{align}
\prob\big(\bY_{t+\delta} = \by'|\bY_t = \by\big) & = \begin{cases}
1-q_t(\by)\delta+o(\delta) & \mbox{ if $\by'=\by$,}\\
r_t(\by,\by')\delta+o(\delta) & \mbox{ if $\by'\neq \by$,}
\end{cases}\\
q_t(\by)&:=\sum_{\by'\neq \by} r_t(\by,\by')
\end{align}

Then we have,
\begin{align}
D(P\|\hP) =\int_{0}^T \E\big\{ \sum_{\by\neq \bY_t}
\Delta\big(r_t(\bY_t,\by)\| \hr_t(\bY_t,by)\big)\big\}\, \de t
\end{align}
where $\Delta(r\|\hr)$ is KL divergence between two Poisson random variables
of means $r$ and $\hr$, namely 
\begin{align}
\Delta(r\|\hr) =  r\, \log\frac{r}{\hr}
-r+\hr\, .
\end{align}

As a special case, we have the symmetric process of Section  \ref{sec:BSC}
to generate $\bx\in\{+1,-1\}^n$.
Recall that in this case, we need an estimate $\hm_i(t;\by)$ of
the conditional expectation $m_i(t;\by) = \E[x_i|\bx\otimes\bZ_t=\by]$
(where $Z_{t,i}\in\{+1,-1\}$, $\E Z_{t,i}=t$).
We use it to form the probabilities
\begin{align}
\hp_i(\by;t) &= \frac{1+t^2}{2 t(1-t^2)}-\frac{1}{1-t^2}y_i\, \hm_i(t;\by)\, .
\end{align}
Substituting in the above, we have:
\begin{align}
D(P\|\hP) &=\int_{0}^T\sum_{i=1}^n\E\big\{ 
\Delta\big(p_i(\bY_t;t)\| \hp_i(\bY_t;t)\big)\big\}\, \de t\, .
\end{align}
It is immediate to generalize this formula for the symmetric process of Section 
\ref{sec:SC}.

Another special case is given by the Poisson process of Section \ref{sec:Poisson}.
In this case, the KL divergence is 
\begin{align}
D(P\|\hP) &=\int_{0}^T\sum_{i=1}^n\E\big\{ 
\Delta\big(m_i(\bY_t;t)\| \hm_i(\bY_t;t)\big)\big\}\, \de t\, .
\end{align}
%

%
%****************************************
%
\section{Sampling Mixtures: Omitted technical details}
\label{sec:OmittedSamplingGaussians}

We used stochastic gradient descent with batch size $50$ over 
a number of epochs and samples that changes depending on the row of Figures 
\ref{fig:Mixture_Isotropic}, \ref{fig:Mixture_Obs}.
A fixed number of samples is generated, while the noise $\bG$ and signal-to-noise ratio
$t$ are resampled independently at each SGD sample.
At training time we sample $t= \tan(\alpha)^{-2}$ with $\alpha\sim\Unif([0,\pi/2])$.
At generation time we use a Euler discretization with $K$ equi-spaced values
of $\alpha$,  $K\in\{200,400\}$ and check that results are insensitive to the value of $K$.

%
%****************************************
%
\section{Sampling images: Omitted technical details}
\label{sec:OmittedSamplingImages}

\subsection{The distribution over images}
\label{sec:ToyDistribution}

Here we define the distribution over images that was used in the experiments
of Section \ref{sec:ToyNum}. 
It is convenient to recast an image $\bx\in\reals^{3\times w\times h}$
as $\bx = (\bx(i_1,i_2))_{i_1\le w-1,i_2\le h-1}$ where $\bx(i_1,i_2) \in\reals^3$. In other words,
$\bx(i_1,i_2)$ is the RGB encoding of pixel $i_1,i_2$.
We then set
\begin{align}
\bx(i_1,i_2) = \tanh(\bpsi(i_1,i_2))\, ,\;\;\;\;\;
\bpsi(i_1,i_2) = \bpsi_0+ \bpsi_1\cos(q_1i_1+q_2i_2)\, ,
\end{align}
and generate a random image by drawing $(\bpsi_0,\bpsi_1,\bq)$ randomly (here $\bq=(q_1,q_2)$).

More specifically, in our experiments we took these three vectors to be independent with
\begin{align}
\bpsi_0 &=\begin{cases}
(1.95, 0, 0.05) & \mbox{with probability $1/2$,}\\
(0.05, 0, 1.95) & \mbox{with probability $1/2$,}\\
\end{cases}\\
\bpsi_1 & \sim \normal\big(\bzero,(1/16)\id_3\big)\, ,\\
\bq & = \Big(\frac{4\pi}{w}\, U_1,\frac{4\pi}{h}\, U_2\Big)\, ,\;\;\;\; U_1,,U_2\sim \Unif([0,1])\, .
\end{align}

\subsection{Some details of the training}

We used stochastic gradient descent with batch size $4$ over $n=300$ samples
(images generated according to the model in the previous section), for $100$ epochs.
While these samples are kept fixed, the noise vector $\bG$ and signal-to-noise ratio
$t$ (cf. ~Eq.~\eqref{eq:RiskDenoise}) are resampled independently at each SGD sample.

%
%****************************************
%
\section{Shift-invariant Gaussians: Omitted derivations}
\label{sec:OmittedSampling}

\subsection{General formulas for isotropic diffusions}

The optimal convolution with window size $2r+1$
is obtained by solving Eq.~\eqref{eq:OptConvolution}. This results in 
\begin{align}
\ell_t(u) &= \sum_{k=0}^{r}\frac{\hc_k}{1+t(2r+1)\hc_k} \cos\left(\frac{(2k+1)\pi u}{2(r+1)}\right)\, .
\end{align}
where the parameters $\hc_k$ characterize the covariance:
\begin{align}
\hc_k := \frac{1}{2r+1}\sum_{u=-r}^r c(u)\, \cos\left(\frac{(2k+1)\pi u}{2(r+1)}\right)\, .
\end{align}

Considering again the example $\bSigma=\id+\alpha\bfone\bfone^{\sT}$,
we obtain, for $b_{r,\alpha}:=1+(2r+1)\alpha$,
\begin{align}
\ell_t(0) &= \frac{1}{1+t}+ \frac{\alpha}{(1+t)(1+b_{\alpha,r}t)}\, ,\\
\ell_t(j) & = \frac{\alpha}{(1+t)(1+b_{\alpha,r}t)}\;\;\;\; \mbox{for $1\le |j|\le r$.}
\end{align}
Again, at any $t$, we get $\bX_t\sim\normal(\bzero,\bSigma^{\bX}_t)$ where $\bSigma_t^{\bX}$
is a circulant matrix with (by construction), with eigenvalue decomposition
\begin{align}
 \bSigma_t^{\bX} &= \sum_{q\in B_n} \sigma^{\bX}_{t}(q)\bphi_q\bphi_q^*\,,\;\;\;\;\; (\bphi_q)_{\ell} = \frac{1}{\sqrt{n}}\, e^{iq\ell}\\
 B_n&:=\Big\{q=\frac{2\pi k}{n}: -(n/2)+1\le k\le (n/2)\Big\}\, ,\, .
 \end{align}
 As $t\to\infty$, we get $ \sigma^{\bX}_{t}(q)\to \sigma^{\bX}(q)$, where
 \begin{align}
 \sigma^{\bX}(q) &= F(\nu(q);c_0)\, ,\\ c_0&:=\frac{1}{1+(2r+1)\alpha}\, ,\;\;\;\; \nu(q) := 
 \frac{\sin (q(r+1/2))}{(2r+1)\sin(q/2)}\, ,
 \end{align}
 and
 \begin{align}
 F(\nu;c) :=  \int_0^{\infty}\!\frac{1}{(1+s)^{2(1-\nu)}(c+s)^{2\nu}}\, \de s \, .
 \end{align}
 For any $c\in (0,1)$ (which is the case when $c=c_0$ defined above),
 the function $\nu\mapsto F(\nu;c)$ is strictly positive, continuously differentiable
 and convex. Further $F(1;c)= 1/c$, $F(0;c)=1$ and, writing $F'(\nu;c)$ for the 
 derivative of $F$ with respect to $\nu$
 \begin{align}
 F'(1;c) = \int_0^{\infty} \frac{2}{(c+s)^2}\log \left(\frac{1+s}{c+s}\right) \, \de s\, .
 \end{align}
 For $s\ge 0$, $c\in (0,1)$, $1\le \log((1+s)/(c+s))\le \log(1/c)$, whence
 \begin{align}
\frac{2}{c}\le F'(1;c) \le \frac{2}{c}\log(1/c)\, .
\end{align}
 
 For $q\to 0$ we have the Taylor expansion
\begin{align}
\sigma^{\bX}(q) = F(1;c_0)-\frac{1}{6}F'(1;c_0)\, r(r+1) \, q^2+O(q^4)\, .
\end{align}

Define the second moment correlation length for the generated process
\begin{align}
\xi_{2}(r):= \lim_{n\to\infty}\sqrt{\frac{\sum_{1\le i,j\le n}\Sigma^{\bX}_{i,j}d_n(i,j)^2}{\sum_{1\le i,j\le n}\Sigma^{\bX}_{i,j}}}
\end{align}
The above implies
\begin{align}
\xi_{2}(r)= \sqrt{\frac{r(r+1)F'(1;c_0)}{6F(1;c_0)}}\, ,
\end{align}
and therefore 
\begin{align}
\sqrt{\frac{r(r+1)}{3}}\le \xi_{2}(r)\le \sqrt{\frac{r(r+1)\log(1+(2r+1)\alpha)}{3}}\, .
\end{align}

\subsection{Proof of Proposition \ref{prop:LobsIsGood}}

The generated process $Y_{0,t}$ satisfies 
\begin{align}
\de Y_{0,t} = \frac{\alpha b^2 Y_{0,t}}{1+\alpha b^2 t}\, \de t +\de B_{0,t}\, ,
\end{align}
which is easily integrated to yield
\begin{align}
Y_{0,t} = \int_0^t\left(\frac{1+\alpha b^2 t }{1+\alpha b^2 s}\right)\, \de B_{0,t}\, ,
\end{align}
In particular, there exists a standard normal random variable $G_0$ such that
the following limit holds almost surely
\begin{align}
\lim_{n\to\infty}\frac{\alpha b^2 Y_{0,t}}{1+\alpha b^2 t} = \sqrt{\alpha b^2}\, G_0\, .
\label{eq:G0Def}
\end{align}

Next consider the generated process $Y_{i,t}$ for any $i\ge 1$:
\begin{align}
\de Y_{i,t} = \frac{1}{1+t} Y_{i,t}\, \de t
+\frac{\alpha b t}{(1+\alpha b^2 t)(1+t)}Y_{0,t}\, \de t +\de B_{i,t}\, ,
\end{align}
which yields
\begin{align}
 Y_{i,t} = 
\int_0^t\frac{1+t}{(1+s)^2}\cdot \frac{\alpha b s}{1+\alpha b^2 s}\, Y_{0,s}\, \de s +
\int_0^t\frac{1+t}{1+s}\, \de B_{i,s}\, ,\label{eq:ReprYit}
\end{align}
In particular,  since $(Y_{0,t})_{t\ge 0}$ is Gaussian and independent of $(B_{i,t})_{i\ge 1, t\ge 0}$,
for any fixed $t$, $\bY_{\star,t} = (Y_{i,t})_{1\le i\le n}$ is centered Gaussian.
Indeed its covariance takes the form $\bSigma_{\bY,t} = c_1(t)\id+c_2(t)\bfone\bfone_{\sT}$
for suitable constants $c_1(t)$, $c_2(t)$. 

Further letting $G_0$ be defined as per Eq.~\eqref{eq:G0Def},
we have the almost sure limit
\begin{align}
\lim_{t\to \infty}\int_0^t\frac{1}{(1+s)^2}\cdot \frac{\alpha b s}{1+\alpha b^2 s}\, Y_{0,s}\, \de s 
&=\int_0^{\infty}\frac{1}{(1+s)^2}\cdot \sqrt{\alpha}\, G_0\, \de s\\ 
&=\sqrt{\alpha}\, G_0\, .
\end{align}
Define $G_i:= \int_0^{\infty} (1+s)^{-1}\, \de B_{i,s}$. We note that
the $(G_i)_{0\le i\le n}$ is a collection of i.i.d. standard normal random variables
and, using the representation \eqref{eq:ReprYit}, almost surely
\begin{align}
\lim_{t\to\infty} \frac{1}{1+t} Y_{i,t} = \sqrt{\alpha}\, G_0+G_i\, .\label{eq:LimYbyT}
\end{align}
Finally, defining 
\begin{align}
\bX_t = \bm_{\star}(Y_{0,t},\bY_{\star,t};t)=\frac{1}{1+t} \bY_{\star,t}\, 
+\frac{\alpha b t}{(1+\alpha b^2 t)(1+t)}Y_{0,t}\, \bfone\, .\label{eq:Xtdef}
\end{align}
and using again Eqs.~\eqref{eq:G0Def} and \ref{eq:LimYbyT}, we obtain
\begin{align}
\lim_{t\to\infty} \bX_t = \sqrt{\alpha}\, G_0\bfone +\bG_*\, .
\end{align}
We note that the right-hand side has the target distribution $\mu_n$, while the left-hand side
is Gaussian for every $t$ (because $\bY_t$ is Gaussian as pointed out above).
Therefore the claim follows from the following standard fact.
\begin{lemma}
Let $(\bZ_k)_{k\ge 1}$ be a sequence of Gaussian vectors and assume $\bZ_k\to \bZ_{\infty}$
almost surely, where $\bZ_{\infty}$ is a non-degenerate Gaussian vector. 
Then, denoting by $\nu_k$ the law of $\bZ_k$, we have
\begin{align}
\lim_{k\to\infty} W_2(\nu_k,\nu_{\infty}) = 0, ,
\;\;\;\; \lim_{k\to\infty} \|\nu_k-\nu_{\infty}\|_{\sTV} = 0\, .
\end{align}
\end{lemma}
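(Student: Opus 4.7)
The plan is to reduce everything to convergence of the mean vector and covariance matrix, then invoke explicit formulas for $W_{2}$ and KL divergence between Gaussians.

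First, I would observe that almost-sure convergence $\bZ_{k}\to \bZ_{\infty}$ implies convergence in distribution. For Gaussian vectors this is equivalent, via convergence of characteristic functions $\exp(i\langle t,\bmu_{k}\rangle - \tfrac12 \langle t,\bSigma_{k} t\rangle)$, to convergence of the mean vectors $\bmu_{k}=\E\bZ_{k}\to \bmu_{\infty}$ and covariance matrices $\bSigma_{k}=\cov(\bZ_{k})\to \bSigma_{\infty}$ in $\reals^{n}$ and $\reals^{n\times n}$ respectively. (Equivalently, one can read off $\bmu_{k}\to \bmu_{\infty}$ and $\E[Z_{k,i}Z_{k,j}]\to \E[Z_{\infty,i}Z_{\infty,j}]$ from a.s.\ convergence of the $\bZ_{k}$ once one verifies uniform integrability of $\|\bZ_{k}\|^{2}$, which follows from the Gaussian tail bound together with boundedness of $(\bmu_{k},\bSigma_{k})$.)

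Next, for the Wasserstein part, I would recall the closed form (Gelbrich / Bures–Wasserstein)
\begin{align*}
W_{2}\big(\normal(\bmu_{1},\bSigma_{1}),\normal(\bmu_{2},\bSigma_{2})\big)^{2}
= \|\bmu_{1}-\bmu_{2}\|^{2} + \Tr\!\big(\bSigma_{1}+\bSigma_{2}-2(\bSigma_{1}^{1/2}\bSigma_{2}\bSigma_{1}^{1/2})^{1/2}\big).
\end{align*}
Since the matrix square root and the map $\bSigma\mapsto \bSigma^{1/2}$ are continuous on the cone of positive semidefinite matrices, the right-hand side is jointly continuous in $(\bmu,\bSigma)$; combining this with $(\bmu_{k},\bSigma_{k})\to (\bmu_{\infty},\bSigma_{\infty})$ yields $W_{2}(\nu_{k},\nu_{\infty})\to 0$. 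Note that non-degeneracy of $\bSigma_{\infty}$ is not required for this part.

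For the TV bound I would use Pinsker's inequality $\|\nu_{k}-\nu_{\infty}\|_{\sTV}\le \sqrt{\tfrac12\, D(\nu_{k}\|\nu_{\infty})}$ together with the explicit Gaussian KL formula
\begin{align*}
D\big(\normal(\bmu_{1},\bSigma_{1})\,\|\,\normal(\bmu_{2},\bSigma_{2})\big)
= \tfrac12\Big[\Tr(\bSigma_{2}^{-1}\bSigma_{1}) - n + (\bmu_{2}-\bmu_{1})^{\sT}\bSigma_{2}^{-1}(\bmu_{2}-\bmu_{1}) + \log\tfrac{\det\bSigma_{2}}{\det\bSigma_{1}}\Big].
\end{align*}
This is where the non-degeneracy hypothesis on $\bZ_{\infty}$ is essential: because $\bSigma_{\infty}\succ 0$ and $\bSigma_{k}\to \bSigma_{\infty}$, for all $k$ sufficiently large $\bSigma_{k}\succ 0$ as well, so $\nu_{k}\ll \nu_{\infty}$ and the formula is well-defined. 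The expression is continuous in $(\bmu_{1},\bSigma_{1})$ at $(\bmu_{\infty},\bSigma_{\infty})$ (with fixed positive-definite denominator) and vanishes at this point, hence $D(\nu_{k}\|\nu_{\infty})\to 0$ and the TV conclusion follows.

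The only non-routine step is upgrading a.s.\ convergence to convergence of the covariance parameters; everything else is a direct invocation of the closed-form Gaussian formulas. The non-degeneracy hypothesis enters precisely to give meaning to $\bSigma_{\infty}^{-1}$ and $\log\det\bSigma_{k}-\log\det\bSigma_{\infty}$ in the KL step, without which one could not rule out degenerate limits where TV fails to go to zero.
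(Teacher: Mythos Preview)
Your argument is correct. The paper itself does not prove this lemma: it is stated as a ``standard fact'' at the end of the proof of Proposition~\ref{prop:LobsIsGood} and left without justification. Your proposal therefore supplies what the paper omits, and the route you take---reduce almost-sure convergence to convergence of $(\bmu_k,\bSigma_k)$, then feed these into the Bures--Wasserstein formula for $W_2$ and into Pinsker plus the Gaussian KL formula for total variation---is the natural one.

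One small point worth tightening: in your parenthetical alternative, you justify uniform integrability of $\|\bZ_k\|^2$ by appealing to boundedness of $(\bmu_k,\bSigma_k)$, but the latter is precisely what you are trying to establish. The cleanest non-circular path is the one you already sketch first: almost-sure convergence gives convergence in distribution, hence tightness; for Gaussian families tightness forces $\sup_k\|\bmu_k\|<\infty$ and $\sup_k\Tr(\bSigma_k)<\infty$; then every subsequential limit of $(\bmu_k,\bSigma_k)$ must parametrize the (unique) weak limit $\nu_\infty=\normal(\bmu_\infty,\bSigma_\infty)$, so the whole sequence converges. From there your $W_2$ and TV arguments go through exactly as written, and your identification of where non-degeneracy enters (only in the KL/TV step, to make $\bSigma_\infty^{-1}$ and $\log\det$ well-defined) is accurate.
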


\end{document}